\def\figref#1{figure~\ref{#1}}
\def\eqref#1{equation~\ref{#1}}
\def\1{\bm{1}}
\DeclareMathAlphabet{\mathsfit}{\encodingdefault}{\sfdefault}{m}{sl}
\SetMathAlphabet{\mathsfit}{bold}{\encodingdefault}{\sfdefault}{bx}{n}
\newcommand{\E}{\mathbb{E}}
\newcommand{\Var}{\mathrm{Var}}
\newif\ifpreprint
\definecolor{codegreen}{rgb}{0,0.6,0}
\definecolor{codegray}{rgb}{0.5,0.5,0.5}
\definecolor{codepurple}{rgb}{0.58,0,0.82}
\definecolor{backcolour}{rgb}{0.95,0.95,0.92}
\lstdefinestyle{mystyle}{
    backgroundcolor=\color{backcolour},   
    commentstyle=\color{codegreen},
    keywordstyle=\color{magenta},
    numberstyle=\tiny\color{codegray},
    stringstyle=\color{codepurple},
    basicstyle=\ttfamily\footnotesize,
    breakatwhitespace=false,
    breaklines=true,
    captionpos=b,
    keepspaces=true,
    numbers=left,
    numbersep=5pt,
    showspaces=false,
    showstringspaces=false,
    showtabs=false,
    tabsize=2
}
\title{Experience Replay with Random Reshuffling}
\author{\name Yasuhiro Fujita \email fujita@preferred.jp \\
      \addr Preferred Networks, Inc.}
\newtheorem{lemma}[theorem]{Lemma}
\DeclarePairedDelimiter{\abs}{\lvert}{\rvert}
\begin{document}

\maketitle

\begin{abstract}
Experience replay is a key component in reinforcement learning for stabilizing learning and improving sample efficiency.
Its typical implementation samples transitions with replacement from a replay buffer.
In contrast, in supervised learning with a fixed dataset, it is a common practice to shuffle the dataset every epoch and consume data sequentially, which is called random reshuffling (RR).
RR enjoys theoretically better convergence properties and has been shown to outperform with-replacement sampling empirically.
To leverage the benefits of RR in reinforcement learning, we propose sampling methods that extend RR to experience replay, both in uniform and prioritized settings, and analyze their properties via theoretical analysis and simulations.
We evaluate our sampling methods on Atari benchmarks, demonstrating their effectiveness in deep reinforcement learning.
\ifpreprint
Code is available at \url{https://github.com/pfnet-research/errr}.
\fi
\end{abstract}

\section{Introduction}

Reinforcement learning (RL) agents often learn from sequentially collected data, which can lead to highly correlated updates and unstable learning.
Experience replay was introduced to mitigate this issue by storing past transitions and sampling them later for training updates~\citep{Lin1992, Mnih2015}.
By drawing samples from a replay memory, the agent breaks the temporal correlations in the training data, leading to more stable and sample-efficient learning~\citep{Mnih2015}.
Typically, these samples are drawn with replacement at random from the replay buffer.
This approach treats past experiences as a reusable dataset like in supervised learning (SL).

However, there is a key difference between SL and RL in how data can be reused.
In SL, one can reshuffle the entire dataset each epoch, guaranteeing that each sample is seen exactly once per epoch.
In contrast, standard experience replay in RL samples experiences with replacement, meaning some experiences may be seen multiple times while others are completely missed.
In the stochastic optimization literature, random reshuffling (RR)---shuffling the data each epoch---has been shown to yield faster convergence than sampling with replacement in many cases, both in theory and in practice~\citep{bottou-slds-open-problem-2009,Bottou2012,Haochen2019,Nagaraj2019,Rajput2020,mishchenko2020random,Gurbuzbalaban2021,beneventano2023trajectories}.
While the assumptions of the theoretical results may not hold in RL settings in general, this insight suggests that a similar strategy in experience replay might outperform the conventional sampling method.

To explore this possibility, we propose two novel methods that integrate RR.
For uniform experience replay, our method RR-C (random reshuffling with a circular buffer) applies reshuffling to the indices of a circular buffer so that each index is sampled exactly once per epoch, thereby reducing the variance of sample counts and ensuring a balanced use of past experiences.
For prioritized experience replay, we introduce RR-M (random reshuffling by masking), which adapts RR to settings with dynamic transition priorities by dynamically masking transitions that have been sampled more often than expected.
This strategy effectively approximates an epoch-level without-replacement sampling even as priorities continuously change.

Our contributions are summarized as follows:
\begin{itemize}
    \item We bridge the gap between SL and RL sampling strategies by adapting RR to experience replay, addressing the unique challenges of dynamic buffer contents and changing priorities.
    \item For uniform experience replay, we propose RR-C, which applies RR to circular buffer indices rather than transitions themselves, ensuring balanced utilization while maintaining computational efficiency.
    \item For prioritized experience replay, we develop RR-M, which tracks actual versus expected sample counts and masks out priorities of oversampled transitions, providing the variance reduction benefits of RR while respecting priority-based sampling.
    \item We analyze our methods both theoretically and through simulations, illustrating their bias and variance properties compared to standard methods.
    \item We empirically demonstrate through experiments on Atari benchmarks that both RR-C and RR-M provide modest performance improvements across different deep RL algorithms: DQN, C51, and DDQN+LAP.
\end{itemize}

\section{Preliminaries}

\subsection{Random reshuffling}

In machine learning, many optimization problems involve minimizing an empirical loss $F(w) = \frac{1}{N}\sum_{i=1}^N f_i(w)$, where each $f_i(w)$ represents the loss on a data sample.
Stochastic gradient descent (SGD) is a prevalent method for such tasks, especially when $N$ is large.
At each iteration $t$, SGD randomly selects an index $i_t \in \{1,\dots,N\}$ and updates the parameter $w$ as $w_{t+1} = w_t - \eta_t \nabla f_{i_t}(w_t)$, with $\eta_t$ being the learning rate.
Sampling $i_t$ with replacement from $\{1,\dots,N\}$ provides an unbiased estimate of the full gradient $\nabla F(w_t)$, supporting convergence guarantees under appropriate conditions~\citep{robbins1951stochastic}.

Random reshuffling (RR) is an alternative way of sampling $i_t$ that shuffles the order of the dataset $\{1,\dots,N\}$ every epoch and sequentially processes each sample exactly once per epoch.
This approach ensures uniform data coverage within each cycle, potentially reducing variance in updates.
Not only is it widely used in practice, but RR has also been shown to outperform with-replacement sampling theoretically under certain conditions; for example, achieving a faster convergence rate of $O(1/K^2)$ compared to the $O(1/K)$ rate of with-replacement sampling SGD for smooth and strongly convex loss functions, given a sufficiently large number of epochs $K$~\citep{Nagaraj2019}.

\subsection{Experience replay}

Experience replay is a technique for RL that stores past transitions $(s_t, a_t, r_t, s_{t+1})$ in a replay buffer and samples a minibatch from the buffer to update the parameters of a neural network~\citep{Lin1992,Mnih2015}.
Typically, the replay buffer is a FIFO (first-in first-out) queue with a fixed capacity and implemented as a circular buffer, where new transitions overwrite old transitions when the buffer is full.
If the probability of sampling a transition is uniform across the buffer, we call it uniform experience replay.

Prioritized experience replay~\citep{Schaul2016} is a popular variant of experience replay that assigns a priority to each transition and samples transitions with a probability proportional to their priority.
How to compute the priority of a transition is a key design choice in prioritized experience replay, and several methods have been proposed in the literature~\citep{Fujimoto2020,oh2022modelaugmented,saglam2023actor,sujit2023prioritizing}.
The probability of sampling a transition is defined as $P(i) = p_i / \sum_k p_k$, where $p_i$ is the priority of transition $i$.
The priority  is updated after a transition is sampled, which makes it non-stationary.
We can consider uniform experience replay as a special case of prioritized experience replay where the priority of each transition is fixed to a global constant.
Typical implementations use a sum tree data structure to efficiently sample transitions according to priorities~\citep{Schaul2016}.

\section{Experience replay with random reshuffling}

\subsection{Sampling methods for experience replay}

Given the probabilities of transitions, flexibility remains in how minibatches are sampled according to these probabilities.
We summarize the sampling methods of experience replay in popular RL code bases in \cref{tab:codebase}.
The most common method for both uniform and prioritized experience replay is to sample transitions with replacement, where the same transition can be sampled multiple times in a minibatch.
The sample code for uniform experience replay is shown as \texttt{sample\_with\_replacement} in \cref{fig:uer_code}.
Other approaches include within-minibatch without-replacement sampling (shown as \texttt{sample\_without\_replacement} in \cref{fig:uer_code}) and stratified sampling that segments the range of query values into strata and samples from each stratum, both of which prevent sampling the same transition multiple times in a minibatch.

These sampling methods still exhibit variance in the number of times each transition is sampled throughout the training process.
It is possible that some transitions are never sampled while other transitions are sampled multiple times even in the case of uniform experience replay, where each transition is considered equally important.
Intuitively, this variance is unnecessary and can be mitigated by a better sampling method like RR.
The variance reduction could lead to more stable learning and better sample efficiency.
Why is RR not used in experience replay?

While RR has been shown to be superior in SL, it is not directly applicable to experience replay in RL\footnote{Offline RL algorithms with a fixed dataset and online RL algorithms that repeat minibatch updates over the latest batch of transitions like PPO~\citep{schulman2017proximal} can be exceptions if we define experience replay so that it includes these cases.}.
Both the size and the content of the replay buffer are dynamically changing, which makes it difficult to apply RR directly.
Prioritized experience replay introduces additional complexity, where the priority of a transition is not fixed and changes every time it is sampled.
We tackle these challenges by proposing two sampling methods that extend RR to experience replay, both in uniform and prioritized settings, which we describe in the following sections.

\subsection{Uniform experience replay with random reshuffling}

Our aim is to extend RR to experience replay in a way that satisfies the following properties:
\begin{itemize}
    \item \textbf{Equivalence to RR}: If the transitions in the buffer are fixed, the sampling method should work just like RR in SL.
    \item \textbf{Variance reduction}: Even if the transitions in the buffer are changing, the sampling method should reduce the variance of how many times each transition is sampled while not introducing significant bias.
\end{itemize}

We propose a simple extension of RR that satisfies these properties: applying RR to the indices of a circular buffer, rather than to the transitions themselves, assuming that the replay buffer is implemented as a circular buffer.
We call this method random reshuffling with a circular buffer (RR-C).

The sample code of RR-C is shown as \texttt{sample\_rrc} of \cref{fig:uer_code}.
Specifically, we create a shuffled list of indices $[0,1,\dots,C-1]$ (\texttt{rr\_buffer} in the figure), where $C$ is the capacity of the circular buffer, and sequentially consume the list when we sample a minibatch during an epoch.
When we reach the end of the list, we create a new shuffled list of indices and start a new epoch.
It is easy to see that each index is sampled exactly once in an epoch, which is the key property of RR.
Although old transitions are overwritten by new transitions in a circular buffer as RL training proceeds, if a transition stays in the buffer for at least $n$ whole epochs, we can guarantee that it is sampled at least $n$ times.
If a sampled index has not been assigned to a transition yet, we simply skip it and continue with the next.

This method is easy to implement and has no hyperparameters to be tuned.
It is also computationally efficient because we only need to make a shuffled list of indices at the beginning of each epoch.

\begin{figure}[h]
\begin{lstlisting}[language=Python]
import numpy

def sample_with_replacement(transitions: list, batch_size: int) -> list:
    """Sample with replacement."""
    indices = numpy.random.randint(0, len(transitions), batch_size)
    return [transitions[i] for i in indices]

def sample_without_replacement(transitions: list, batch_size: int) -> list:
    """Sample without replacement."""
    indices = numpy.random.choice(len(transitions), batch_size, replace=False)
    return [transitions[i] for i in indices]

def sample_rrc(transitions: list, batch_size: int, max_size: int, rr_buffer: list) -> list:
    """Sample with RR-C.
    
    max_size: the capacity of the replay buffer
    rr_buffer: a list of shuffled indices
    """
    indices = []
    while len(indices) < batch_size:
        if len(rr_buffer) == 0:
            rr_indices = numpy.arange(0, max_size)
            numpy.random.shuffle(rr_indices)
            rr_buffer[:] = list(rr_indices)
        i = rr_buffer.pop()
        if i < len(transitions):
            indices.append(i)
    return [transitions[i] for i in indices]
\end{lstlisting}
\caption{Python code examples of sampling methods for uniform experience replay. This simplified code is for illustrative purposes and is not identical to actual implementations.}
\label{fig:uer_code}
\end{figure}

Our theoretical analysis shows that RR-C is not strictly unbiased in the early stage of training, but it becomes unbiased and lower variance in later timesteps.
If we denote by $X_{i,k}^{\text{UER}}$ and $X_{i,k}^{\text{RR-C}}$ the number of times transition $i$ (i.e., the transition added to the buffer at timestep $t=i$) is sampled up to timestep $k$ in uniform experience replay with with-replacement sampling (UER) and RR-C, respectively, we have the following results.
\getkeytheorem{rrc_bias}
\getkeytheorem{rrc_var}
See \cref{sec:theory} for details of the analysis and proofs.

\subsection{Prioritized experience replay with random reshuffling}
\label{sec:rrm}

Similarly to the uniform case, we aim to extend RR to prioritized experience replay in a way that satisfies the following properties:
\begin{itemize}
    \item \textbf{Reducibility to RR}: If the transitions in the buffer are fixed and their probabilities are all the same, it should work just like RR in SL.
    \item \textbf{Epoch equivalence to RR with repeated data}: If the priorities of transitions are all rational numbers, we can consider an epoch where each transition appears exactly how many times it should be sampled in expectation. For example, given three transitions whose priorities are fixed to [1, 0.5, 2], for an epoch of length 7, the sampling method should sample the first transition twice, the second once, and the third four times.
    \item \textbf{Variance reduction}: Similarly to the uniform case, even if the transitions in the buffer are changing and their priorities are non-stationary, the sampling method should reduce the variance of how many times each transition is sampled while not introducing significant bias.
\end{itemize}
For a simple case, it seems possible to make a list of shuffled indices and consume it sequentially as in RR-C to satisfy the equivalence to RR with repeated data, e.g., by shuffling the list $[0, 0, 1, 2, 2, 2, 2]$ for the example above.
However, this quickly becomes impractical for large buffer sizes.
Also, it is not clear how to reflect the updated priorities of transitions in the shuffled list.

We propose a method that satisfies the above properties by masking the priorities of transitions that have been oversampled relative to expectation, which we call RR by masking (RR-M).
The sample code of this method is shown as \texttt{sample\_rrm} in \cref{fig:per_code}.
It keeps actual and expected sample counts of transitions and updates them every time a minibatch is sampled.
When a transition is deemed oversampled, it is masked by setting its priority to a tiny value\footnote{In our experiments, we multiply the original priority by 1e-8, not 0, so that the algorithm would work if all the transitions were deemed oversampled due to numerical errors.} until its expected count catches up.
When old transitions are overwritten by new transitions, we reset the actual and expected counts of the overwritten transitions to zero and scale the expected counts so their sum matches the sum of actual counts.
When we sample a minibatch, we do within-minibatch without-replacement sampling according to the masked priorities.
This can be achieved by querying the sum tree in a batch manner, removing the duplicated indices if any, temporarily masking the priorities of the sampled transitions, resample only the amount that is less than the desired minibatch size, and repeating the process until the minibatch is filled.
We include example trajectories of the behavior of RR-M in \cref{tab:rrm_example}.

This method is also easy to implement and has no hyperparameters to be tuned, but it is relatively computationally expensive because every time we sample a minibatch, we need to update the counts of transitions, which takes $O(n)$ time, where $n$ is the number of transitions in the replay buffer.
In practice, it is easy to parallelize the computation of the counts and oversampled indices, e.g., with GPUs.
We discuss the computational cost in more detail in \cref{sec:rrm_efficiency}.

\begin{figure}[h]
\begin{lstlisting}[language=Python]
import numpy

def sample_rrm(transitions: list, sumtree: Any, batch_size: int, actual_counts: numpy.ndarray, expected_counts: numpy.ndarray) -> list:
    """Sample with RR-M.
    
    sumtree: a sum tree data structure priorities and masks
    actual_counts: how many times each transition has been sampled
    expected_counts: how many times each transition should have been sampled in expectation
    """
    # Check if each transition is oversampled (by elementwise comparison).
    oversampled = numpy.greater(actual_counts, expected_counts)
    # Mask oversampled transitions and unmask others.
    sumtree.update_mask(oversampled)
    # Sample without replacement according to masked priorities.
    indices = sumtree.sample(batch_size)
    # Update counts.
    for i in indices:
        actual_counts[i] += 1
    expected_counts += sumtree.probabilities * batch_size
    return [transitions[i] for i in indices]
\end{lstlisting}
\caption{A Python code example of our sampling method for prioritized experience replay, RR-M. This simplified code is for illustrative purposes and is not identical to actual implementations.}
\label{fig:per_code}
\end{figure}

Our theoretical analysis shows that RR-M is biased even for later timesteps.
Yet, under a certain simplified setting, it can be proved that its relative error to the expected sample counts vanishes and achieves lower variance asymptotically.
See \cref{sec:theory} for details of the analysis and proofs.

\section{Experiments}

\subsection{Experience replay simulations}

We first conduct simple numerical simulations of experience replay to illustrate the benefits of our RR-based sampling methods, RR-C and RR-M.
In a 100-timestep simulation, an agent sequentially observes transitions $T_t$ indexed by timesteps $t=0,\dots,99$.
At timestep $t$, the agent stores transition $T_t$ in a replay buffer of capacity 20.
If the buffer contains at least 10 transitions, the agent samples a minibatch of size 4 from the buffer.
We record the indices of sampled transitions and visualize the distribution of their sample counts for each transition across 1000 simulations using different random seeds.

The left side of \figref{fig:sample_count} shows the distributions of sample counts in uniform experience replay simulations.
When we compare with-replacement sampling and without-replacement sampling, we observe that without-replacement sampling exhibits slightly lower-variance sample counts.
However, the variance is still high; occasionally, some transitions are sampled more than 10 times while others may not be sampled at all.
RR-C further reduces the variance of sample counts without noticeable bias; each transition is sampled at most 6 times.

Similarly, the right side of \figref{fig:sample_count} shows the distributions of sample counts in prioritized experience replay simulations.
We simulate non-uniform and dynamic priorities; the priority of each transition is set to $p_t = (t \bmod 25) + 5$ when added to the buffer and decays by a factor of 0.8 every time it is sampled, where these values are chosen arbitrarily for illustration.
Again, we observe that RR-M realizes lower-variance sample counts without significant bias.
We include additional simulation results with different set of parameters in \cref{sec:additional_simulation}.

\begin{figure}
    \begin{center}
        \includegraphics[width=\textwidth]{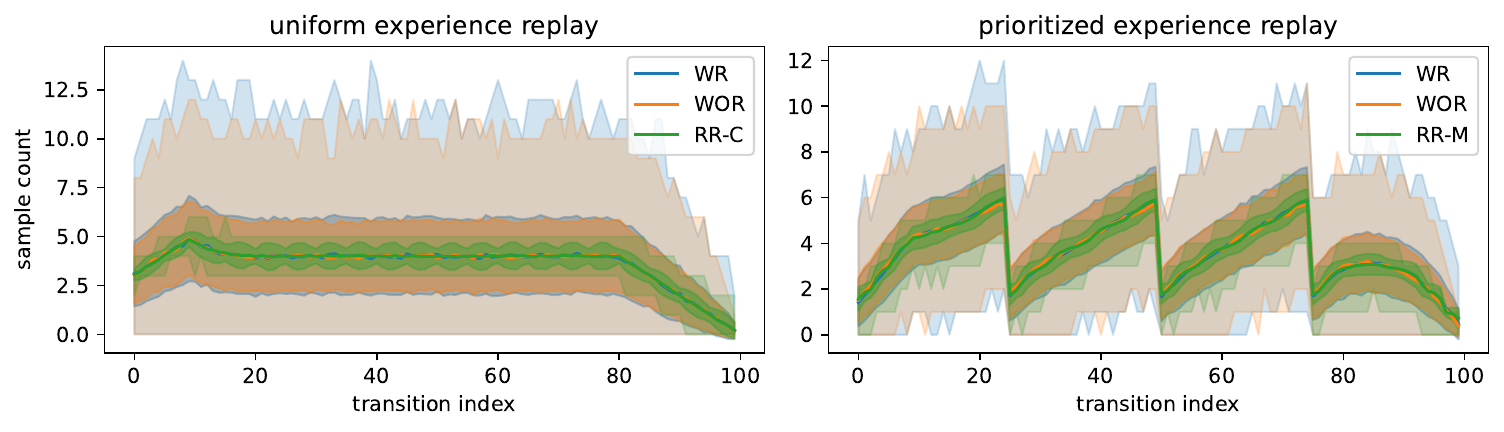}
    \end{center}
    \caption{Distributions of sample counts in experience replay simulations. We ran 100-timestep simulations with different random seeds for each configuration. For each transition, we visualize how many times it is sampled during a simulation. Solid lines represent mean sample counts, thick shaded areas represent mean$\pm$stdev, and light shaded areas represent minimum to maximum over 1000 simulations. WR: with-replacement sampling, WOR: without-replacement sampling, RR-C: RR with a circular buffer, RR-M: RR by masking.}
    \label{fig:sample_count}
\end{figure}

From these simulations, we observe that both our RR-based sampling methods can reduce the variance of sample counts in experience replay without introducing significant bias, which complements our theoretical analysis.

\subsection{Deep RL with uniform experience replay}

To evaluate the effectiveness of experience replay with RR in deep RL settings, we conduct experiments on the Arcade Learning Environments~\citep{Bellemare2013}.
To save computational costs, we use the Atari-10 subset, which is chosen to be small but representative~\citep{Aitchison2023}.

We run the DQN~\citep{Mnih2015} and C51~\citep{Bellemare2017} algorithms and compare the performance of different sampling methods.
We base our implementation on the CleanRL library~\citep{huang2022cleanrl} and keep the hyperparameters the same as the default values.
We run each experiment with 10 random seeds and report the final performance, measured by the mean score over the last 100 episodes.
Learning curves are shown in \cref{sec:learning_curves}.

\begin{table}[h!]
\centering
\caption{Final performance of C51 and DQN with with-replacement sampling (WR) and RR-C. Final performance is measured by the mean episodic returns of the last 100 training episodes and reported as the mean over 10 random seeds with higher values bold. The p-values are computed by Welch's t-test. *: $p<0.05$, **: $p<0.01$.}

\label{tab:c51_dqn}
\begin{tabular}{l|rrl|rrl}
\toprule
 & \multicolumn{3}{c}{C51} & \multicolumn{3}{c}{DQN} \\
 & WR & RR-C & p-value & WR & RR-C & p-value \\
\midrule
Amidar & 347.63 & \textbf{420.61} & 1.1e-02*          & 319.04 & \textbf{341.44} & 3.2e-01 \\
Bowling & \textbf{39.45} & 33.90 & 2.5e-01            & 42.17 & \textbf{43.97} & 6.2e-01 \\
Frostbite & 3568.36 & \textbf{4073.23} & 3.9e-02*     & \textbf{2351.80} & 1897.50 & 2.4e-01 \\
KungFuMaster & 21091.90 & \textbf{21407.00} & 7.7e-01 & 3674.70 & \textbf{4291.40} & 8.3e-01 \\
Riverraid & 11984.34 & \textbf{12776.65} & 6.0e-02    & 8346.39 & \textbf{8513.25} & 6.1e-01 \\
BattleZone & 22483.00 & \textbf{23776.00} & 1.9e-02*  & 22651.00 & \textbf{24546.00} & 3.9e-03** \\
DoubleDunk & \textbf{-15.04} & -16.42 & 2.2e-02*      & -17.86 & \textbf{-14.69} & 2.1e-01 \\
NameThisGame & 8287.14 & \textbf{8844.65} & 1.9e-03** & 6064.81 & \textbf{6919.35} & 1.3e-04** \\
Phoenix & 11920.93 & \textbf{13749.39} & 7.7e-03**    & 9059.42 & \textbf{10194.60} & 6.8e-02 \\
Qbert & 15685.83 & \textbf{16352.80} & 6.4e-02        & 13191.77 & \textbf{13859.60} & 1.1e-01 \\
\bottomrule
\end{tabular}
\end{table}

\Cref{tab:c51_dqn} shows the final performance of C51 and DQN with different sampling methods.
Both C51 and DQN with RR-C outperform their with-replacement sampling counterparts in the majority of games, which demonstrates the effectiveness of RR in experience replay.

\paragraph{Is within-minibatch without-replacement sampling sufficient?}
We also evaluate the performance of C51 with within-minibatch without-replacement sampling and observe that it is no better than with-replacement sampling, which suggests that the benefits of RR come from the buffer-level without-replacement sampling.
This is not surprising because sampling with replacement in this setting rarely samples duplicate transitions in the same minibatch;
the capacity of the buffer is set to 1 million, which is much larger than the minibatch size of 32.
The final performance is shown in \figref{tab:c51_wor}.

\subsection{Deep RL with prioritized experience replay}

To compare sampling methods for prioritized experience replay, we run the double DQN algorithm~\citep{VanHasselt2016} with loss-adjusted experience replay (DDQN+LAP)~\citep{Fujimoto2020} and compare the performance against with-replacement sampling.
We use their official code base and keep the hyperparameters\footnote{Notably, we set action repeat to 0.25 for DDQN+LAP (following the original paper) and to 0 for C51 and DQN.} the same as the default values except for the number of training timesteps, which is set to 10 million steps across all experiments due to the limitation of computational resources.
As in the uniform experience replay experiments, we run each experiment with 10 random seeds and report the final performance which is measured by the mean score over the last 100 episodes.

\Cref{tab:lap_ddqn_rrm} shows the final performance of DDQN+LAP with different sampling methods.
The left side of the table compares with-replacement sampling and RR-M.
Although the differences are less pronounced compared to uniform experience replay experiments, RR-M outperforms with-replacement sampling in 8 out of 10 games, which suggests that RR can also be beneficial in prioritized experience replay.

\paragraph{Is stratified sampling sufficient?}
Stratified sampling is sometimes used in prioritized experience replay, querying a different segment of the sum tree for each transition in a minibatch.
Since priorities are stored in a sum tree in chronological order, stratified sampling ensures diversity in age of transitions in a minibatch, which could have a different benefit for learning.
The right side of \cref{tab:lap_ddqn_rrm} compares stratified sampling and RR-M+ST, where stratified sampling is used instead of within-minibatch without-replacement sampling.
We observe that RR-M+ST outperforms stratified sampling in 8 out of 10 games, which suggests that RR can be beneficial even when stratified sampling is used.

\begin{table}[h!]
\centering
\caption{Final performance of DDQN+LAP with with-replacement sampling (WR), RR-M, stratified sampling (ST), and RR-M+ST (ST applied to RR-M). Format follows \cref{tab:c51_dqn}.}
\label{tab:lap_ddqn_rrm}
\begin{tabular}{l|rrl|rrl}
\toprule
 & \multicolumn{6}{c}{DDQN+LAP} \\
 & WR & RR-M & p-value  & ST & RR-M+ST & p-value \\
\midrule
Amidar & \textbf{196.94} & 195.35 & 9.0e-01 & 181.64 & \textbf{206.07} & 2.5e-02* \\
Bowling & 28.85 & \textbf{29.87} & 6.3e-01 & \textbf{34.86} & 27.57 & 3.8e-02* \\
Frostbite & 1602.00 & \textbf{1761.34} & 7.3e-02 & 1672.56 & \textbf{1684.80} & 9.0e-01 \\
KungFuMaster & 16628.90 & \textbf{17580.60} & 1.4e-01 & 16859.30 & \textbf{17337.90} & 4.7e-01 \\
Riverraid & 7609.97 & \textbf{7756.11} & 1.6e-01 & 7474.01 & \textbf{7810.06} & 2.0e-02* \\
BattleZone & \textbf{22897.00} & 20813.00 & 1.8e-01 & 22099.00 & \textbf{22584.00} & 5.4e-01 \\
DoubleDunk & -17.49 & \textbf{-17.47} & 9.5e-01 & -17.34 & \textbf{-17.18} & 8.2e-01 \\
NameThisGame & 2589.54 & \textbf{2805.85} & 4.9e-03** & 2635.89 & \textbf{2759.89} & 1.0e-01 \\
Phoenix & 4180.98 & \textbf{4342.48} & 1.3e-01 & 4214.82 & \textbf{4399.41} & 1.3e-01 \\
Qbert & 4128.27 & \textbf{4266.98} & 6.1e-01 & \textbf{4109.27} & 4017.55 & 6.6e-01 \\
\bottomrule
\end{tabular}
\end{table}

\section{Related work}

Experience replay is a well-established technique in off-policy RL, originally proposed by \citet{Lin1992} and later popularized in deep Q-networks by \citet{Mnih2015}.
This mechanism has become a cornerstone of many deep RL algorithms, including DQN and its variants~\citep{VanHasselt2016,Bellemare2017}.
Prioritized experience replay~\citep{Schaul2016} biases sampling towards important transitions (e.g. those with high temporal-difference error), which can accelerate learning by replaying valuable experiences more frequently, followed by extensions that differ in how priorities are computed~\citep{Fujimoto2020,oh2022modelaugmented,saglam2023actor,sujit2023prioritizing,yenicesu2024cuer}.
Our proposed sampling methods can be used as a drop-in replacement to with-replacement sampling for both uniform and prioritized experience replay.
They can be combined with extensions that are orthogonal to how transitions are sampled such as hindsight experience replay~\citep{andrychowicz2017hindsight}, which relabels goals for multi-goal RL tasks to improve sample efficiency.

Several studies have investigated sampling methods for experience replay.
\citet{panahi2024investigatinginterplayprioritizedreplay} reported within-minibatch without-replacement sampling has a minor benefit over sampling with replacement in prioritized experience replay in tabular settings with smaller buffer sizes.
Closely related to our work are combined experience replay (CER)~\citep{Zhang2017} and corrected uniform experience replay (CUER)~\citep{yenicesu2024cuer};
CER ensures the most recent transitions are always included in a minibatch to mitigate the negative effects of a large replay buffer, while CUER, a variant of prioritized experience replay, prioritizes transitions that have been sampled less often by reducing the priority every time a transition is sampled.
Similarly to our work, both CER and CUER could reduce the variance in the number of times each transition is sampled, as CER ensures that each transition is sampled at least once and CUER reduces the priority of transitions that have been sampled more often than others.
However, our RR-based sampling methods aim to eliminate such unnecessary variance by either enforcing that each transition is sampled exactly once per epoch (RR-C) or detecting and excluding the oversampled transitions (RR-M).
While CER and CUER favor recent transitions by design, our RR-based sampling methods do not have such a bias and can be applied to both uniform and prioritized experience replay.

\section{Conclusion}

In this work, we introduced two sampling methods that extend random reshuffling to uniform and prioritized experience replay, respectively.
Our theoretical analysis and simple simulations suggest that our methods may reduce the variance in the number of times each transition is sampled without introducing significant bias.
Our empirical results on Atari benchmarks demonstrate that introducing RR to experience replay yields modest improvements across different RL algorithms.
Importantly, our RR-based sampling methods are straightforward to implement and can safely replace with-replacement sampling in existing RL code bases, making them practical for RL practitioners.

\ifpreprint
\subsubsection*{Acknowledgments}
We would like to thank Pieter Abbeel for helpful discussions and feedback on earlier results of this work, and Avinash Ummadisingu for useful suggestions on the manuscript.
\fi

\bibliography{ref}

\begin{thebibliography}{35}
\providecommand{\natexlab}[1]{#1}
\providecommand{\url}[1]{\texttt{#1}}
\expandafter\ifx\csname urlstyle\endcsname\relax
  \providecommand{\doi}[1]{doi: #1}\else
  \providecommand{\doi}{doi: \begingroup \urlstyle{rm}\Url}\fi

\bibitem[Aitchison et~al.(2023)Aitchison, Sweetser, and Hutter]{Aitchison2023}
Matthew Aitchison, Penny Sweetser, and Marcus Hutter.
\newblock Atari-5: Distilling the arcade learning environment down to five games.
\newblock In \emph{ICML}, 2023.

\bibitem[Andrychowicz et~al.(2017)Andrychowicz, Wolski, Ray, Schneider, Fong, Welinder, McGrew, Tobin, Pieter~Abbeel, and Zaremba]{andrychowicz2017hindsight}
Marcin Andrychowicz, Filip Wolski, Alex Ray, Jonas Schneider, Rachel Fong, Peter Welinder, Bob McGrew, Josh Tobin, OpenAI Pieter~Abbeel, and Wojciech Zaremba.
\newblock Hindsight experience replay.
\newblock \emph{NeurIPS}, 2017.

\bibitem[{Bellemare} et~al.(2013){Bellemare}, {Naddaf}, {Veness}, and {Bowling}]{Bellemare2013}
M.~G. {Bellemare}, Y.~{Naddaf}, J.~{Veness}, and M.~{Bowling}.
\newblock The arcade learning environment: An evaluation platform for general agents.
\newblock \emph{Journal of Artificial Intelligence Research}, 47:\penalty0 253--279, jun 2013.

\bibitem[Bellemare et~al.(2017)Bellemare, Dabney, and Munos]{Bellemare2017}
Marc~G Bellemare, Will Dabney, and R{\'e}mi Munos.
\newblock A distributional perspective on reinforcement learning.
\newblock In \emph{ICML}, 2017.

\bibitem[Beneventano(2023)]{beneventano2023trajectories}
Pierfrancesco Beneventano.
\newblock On the trajectories of {SGD} without replacement.
\newblock \emph{arXiv preprint arXiv:2312.16143}, 2023.

\bibitem[Bottou(2009)]{bottou-slds-open-problem-2009}
L\'{e}on Bottou.
\newblock Curiously fast convergence of some stochastic gradient descent algorithms.
\newblock Unpublished open problem offered to the attendance of the SLDS 2009 conference, 2009.

\bibitem[Bottou(2012)]{Bottou2012}
L{\'e}on Bottou.
\newblock Stochastic gradient descent tricks.
\newblock In \emph{Neural Networks: Tricks of the Trade, Reloaded}, pp.\  421--436. Springer, 2012.

\bibitem[Bou et~al.(2023)Bou, Bettini, Dittert, Kumar, Sodhani, Yang, De~Fabritiis, and Moens]{bou2023torchrl}
Albert Bou, Matteo Bettini, Sebastian Dittert, Vikash Kumar, Shagun Sodhani, Xiaomeng Yang, Gianni De~Fabritiis, and Vincent Moens.
\newblock {TorchRL}: A data-driven decision-making library for {PyTorch}.
\newblock \emph{arXiv preprint arXiv:2306.00577}, 2023.

\bibitem[Castro et~al.(2018)Castro, Moitra, Gelada, Kumar, and Bellemare]{castro18dopamine}
Pablo~Samuel Castro, Subhodeep Moitra, Carles Gelada, Saurabh Kumar, and Marc~G. Bellemare.
\newblock Dopamine: {A} {R}esearch {F}ramework for {D}eep {R}einforcement {L}earning.
\newblock 2018.

\bibitem[Dhariwal et~al.(2017)Dhariwal, Hesse, Klimov, Nichol, Plappert, Radford, Schulman, Sidor, Wu, and Zhokhov]{baselines}
Prafulla Dhariwal, Christopher Hesse, Oleg Klimov, Alex Nichol, Matthias Plappert, Alec Radford, John Schulman, Szymon Sidor, Yuhuai Wu, and Peter Zhokhov.
\newblock {OpenAI} baselines.
\newblock \url{https://github.com/openai/baselines}, 2017.

\bibitem[Fujimoto et~al.(2020)Fujimoto, Meger, and Precup]{Fujimoto2020}
Scott Fujimoto, David Meger, and Doina Precup.
\newblock An equivalence between loss functions and non-uniform sampling in experience replay.
\newblock In \emph{NeurIPS}, 2020.

\bibitem[Fujita et~al.(2021)Fujita, Nagarajan, Kataoka, and Ishikawa]{chainerrl}
Yasuhiro Fujita, Prabhat Nagarajan, Toshiki Kataoka, and Takahiro Ishikawa.
\newblock {ChainerRL}: A deep reinforcement learning library.
\newblock \emph{Journal of Machine Learning Research}, 22\penalty0 (77):\penalty0 1--14, 2021.

\bibitem[G{\"u}rb{\"u}zbalaban et~al.(2021)G{\"u}rb{\"u}zbalaban, Ozdaglar, and Parrilo]{Gurbuzbalaban2021}
Mert G{\"u}rb{\"u}zbalaban, Asuman Ozdaglar, and Pablo~A. Parrilo.
\newblock Why random reshuffling beats stochastic gradient descent.
\newblock \emph{Mathematical Programming}, 186\penalty0 (1-2):\penalty0 49--84, 2021.

\bibitem[HaoChen \& Sra(2019)HaoChen and Sra]{Haochen2019}
Jeff~Z. HaoChen and Suvrit Sra.
\newblock Random shuffling beats {SGD} after finite epochs.
\newblock In \emph{ICML}, 2019.

\bibitem[Huang et~al.(2022)Huang, Dossa, Ye, Braga, Chakraborty, Mehta, and Ara^^c3^^bajo]{huang2022cleanrl}
Shengyi Huang, Rousslan Fernand~Julien Dossa, Chang Ye, Jeff Braga, Dipam Chakraborty, Kinal Mehta, and Jo^^c3^^a3o~G.M. Ara^^c3^^bajo.
\newblock {CleanRL}: High-quality single-file implementations of deep reinforcement learning algorithms.
\newblock \emph{Journal of Machine Learning Research}, 23\penalty0 (274):\penalty0 1--18, 2022.

\bibitem[Liang et~al.(2018)Liang, Liaw, Nishihara, Moritz, Fox, Goldberg, Gonzalez, Jordan, and Stoica]{liang2018rllib}
Eric Liang, Richard Liaw, Robert Nishihara, Philipp Moritz, Roy Fox, Ken Goldberg, Joseph~E. Gonzalez, Michael~I. Jordan, and Ion Stoica.
\newblock {RLlib}: Abstractions for distributed reinforcement learning.
\newblock In \emph{ICML}, 2018.

\bibitem[Lin(1992)]{Lin1992}
Long-Ji Lin.
\newblock Self-improving reactive agents based on reinforcement learning, planning and teaching.
\newblock \emph{Machine Learning}, 8\penalty0 (3-4):\penalty0 293--321, 1992.

\bibitem[Mishchenko et~al.(2020)Mishchenko, Khaled, and Richt{\'a}rik]{mishchenko2020random}
Konstantin Mishchenko, Ahmed Khaled, and Peter Richt{\'a}rik.
\newblock Random reshuffling: Simple analysis with vast improvements.
\newblock \emph{NeurIPS}, 2020.

\bibitem[Mnih et~al.(2015)Mnih, Kavukcuoglu, Silver, Rusu, Veness, Bellemare, Graves, Riedmiller, Fidjeland, Ostrovski, Petersen, Beattie, Sadik, Antonoglou, King, Kumaran, Wierstra, Legg, and Hassabis]{Mnih2015}
Volodymyr Mnih, Koray Kavukcuoglu, David Silver, Andrei~A Rusu, Joel Veness, Marc~G Bellemare, Alex Graves, Martin Riedmiller, Andreas~K Fidjeland, Georg Ostrovski, Stig Petersen, Charles Beattie, Amir Sadik, Ioannis Antonoglou, Helen King, Dharshan Kumaran, Daan Wierstra, Shane Legg, and Demis Hassabis.
\newblock Human-level control through deep reinforcement learning.
\newblock \emph{Nature}, 518\penalty0 (7540):\penalty0 529--533, 2015.

\bibitem[Nagaraj et~al.(2019)Nagaraj, Jain, and Netrapalli]{Nagaraj2019}
Dheeraj Nagaraj, Prateek Jain, and Praneeth Netrapalli.
\newblock {SGD} without replacement: Sharper rates for general smooth convex functions.
\newblock In \emph{ICML}, 2019.

\bibitem[Oh et~al.(2022)Oh, Shin, Yang, and Hwang]{oh2022modelaugmented}
Youngmin Oh, Jinwoo Shin, Eunho Yang, and Sung~Ju Hwang.
\newblock Model-augmented prioritized experience replay.
\newblock In \emph{ICLR}, 2022.

\bibitem[Okuta et~al.(2017)Okuta, Unno, Nishino, Hido, and Loomis]{cupy_learningsys2017}
Ryosuke Okuta, Yuya Unno, Daisuke Nishino, Shohei Hido, and Crissman Loomis.
\newblock {CuPy}: A {NumPy}-compatible library for {NVIDIA GPU} calculations.
\newblock In \emph{NeurIPS Workshop on Machine Learning Systems}, 2017.

\bibitem[Panahi et~al.(2024)Panahi, Patterson, White, and White]{panahi2024investigatinginterplayprioritizedreplay}
Parham~Mohammad Panahi, Andrew Patterson, Martha White, and Adam White.
\newblock Investigating the interplay of prioritized replay and generalization.
\newblock In \emph{RLC}, 2024.

\bibitem[Quan \& Ostrovski(2020)Quan and Ostrovski]{dqnzoo2020github}
John Quan and Georg Ostrovski.
\newblock {DQN} {Zoo}: Reference implementations of {DQN}-based agents, 2020.
\newblock URL \url{http://github.com/deepmind/dqn_zoo}.

\bibitem[Raffin et~al.(2021)Raffin, Hill, Gleave, Kanervisto, Ernestus, and Dormann]{stable-baselines3}
Antonin Raffin, Ashley Hill, Adam Gleave, Anssi Kanervisto, Maximilian Ernestus, and Noah Dormann.
\newblock {Stable-Baselines3}: Reliable reinforcement learning implementations.
\newblock \emph{Journal of Machine Learning Research}, 22\penalty0 (268):\penalty0 1--8, 2021.

\bibitem[Rajput et~al.(2020)Rajput, Gupta, and Papailiopoulos]{Rajput2020}
Shashank Rajput, Anant Gupta, and Dimitris Papailiopoulos.
\newblock Closing the convergence gap of {SGD} without replacement.
\newblock In \emph{ICML}, 2020.

\bibitem[Robbins \& Monro(1951)Robbins and Monro]{robbins1951stochastic}
Herbert Robbins and Sutton Monro.
\newblock A stochastic approximation method.
\newblock \emph{The annals of mathematical statistics}, pp.\  400--407, 1951.

\bibitem[Saglam et~al.(2023)Saglam, Mutlu, Cicek, and Kozat]{saglam2023actor}
Baturay Saglam, Furkan~B Mutlu, Dogan~C Cicek, and Suleyman~S Kozat.
\newblock Actor prioritized experience replay.
\newblock \emph{Journal of Artificial Intelligence Research}, 78:\penalty0 639--672, 2023.

\bibitem[Schaul et~al.(2016)Schaul, Quan, Antonoglou, and Silver]{Schaul2016}
Tom Schaul, John Quan, Ioannis Antonoglou, and David Silver.
\newblock Prioritized experience replay.
\newblock In \emph{ICLR}, 2016.

\bibitem[Schulman et~al.(2017)Schulman, Wolski, Dhariwal, Radford, and Klimov]{schulman2017proximal}
John Schulman, Filip Wolski, Prafulla Dhariwal, Alec Radford, and Oleg Klimov.
\newblock Proximal policy optimization algorithms.
\newblock \emph{arXiv preprint arXiv:1707.06347}, 2017.

\bibitem[Sujit et~al.(2023)Sujit, Nath, Braga, and Ebrahimi~Kahou]{sujit2023prioritizing}
Shivakanth Sujit, Somjit Nath, Pedro Braga, and Samira Ebrahimi~Kahou.
\newblock Prioritizing samples in reinforcement learning with reducible loss.
\newblock \emph{NeurIPS}, 2023.

\bibitem[van Hasselt et~al.(2016)van Hasselt, Guez, and Silver]{VanHasselt2016}
Hado van Hasselt, Arthur Guez, and David Silver.
\newblock Deep reinforcement learning with double q-learning.
\newblock In \emph{AAAI}, 2016.

\bibitem[Weng et~al.(2022)Weng, Chen, Yan, You, Duburcq, Zhang, Su, Su, and Zhu]{tianshou}
Jiayi Weng, Huayu Chen, Dong Yan, Kaichao You, Alexis Duburcq, Minghao Zhang, Yi~Su, Hang Su, and Jun Zhu.
\newblock {Tianshou}: A highly modularized deep reinforcement learning library.
\newblock \emph{Journal of Machine Learning Research}, 23\penalty0 (267):\penalty0 1--6, 2022.

\bibitem[Yenicesu et~al.(2024)Yenicesu, Mutlu, Kozat, and Oguz]{yenicesu2024cuer}
Arda~Sarp Yenicesu, Furkan~B Mutlu, Suleyman~S Kozat, and Ozgur~S Oguz.
\newblock {CUER}: Corrected uniform experience replay for off-policy continuous deep reinforcement learning algorithms.
\newblock \emph{arXiv preprint arXiv:2406.09030}, 2024.

\bibitem[Zhang \& Sutton(2017)Zhang and Sutton]{Zhang2017}
Shangtong Zhang and Richard~S. Sutton.
\newblock A deeper look at experience replay: {A} study on parameters and batch updating.
\newblock In \emph{NeurIPS Deep Reinforcement Learning Symposium}, 2017.

\end{thebibliography}
\bibliographystyle{tmlr}

\clearpage

\appendix
\crefalias{section}{appendix}

\section{Sampling methods of experience replay in popular RL code bases}
\label{sec:codebase}

We summarize the sampling methods of experience replay in popular RL code bases in \cref{tab:codebase}.
\begin{table}[H]
    \caption{Sampling methods of experience replay in popular RL code bases. WR: with-replacement sampling, WOR: (within-minibatch) without-replacement sampling, ST: stratified sampling. Code bases are sorted in alphabetical order. URLs were checked around February 23, 2025.}
    \label{tab:codebase}
    \begin{minipage}{\textwidth}
    \centering
    \begin{tabular}{lrrrr}
        \toprule
        Code base & Uniform ER & Prioritized ER \\
        \midrule

        CleanRL~\citep{huang2022cleanrl} & WR\footnote{CleanRL depends on Stable-Baselines3's \texttt{ReplayBuffer}} & N/A \\

        Dopamine~\citep{castro18dopamine} & WR\footnote{\url{https://github.com/google/dopamine/blob/bec5f4e108b0572e58fc1af73136e978237c8463/dopamine/tf/replay_memory/circular_replay_buffer.py\#L552}} & ST\footnote{\url{https://github.com/google/dopamine/blob/bec5f4e108b0572e58fc1af73136e978237c8463/dopamine/tf/replay_memory/prioritized_replay_buffer.py\#L160}} \\

        DQN 3.0~\citep{Mnih2015} & WR\footnote{\url{https://github.com/google-deepmind/dqn/blob/9d9b1d13a2b491d6ebd4d046740c511c662bbe0f/dqn/TransitionTable.lua\#L124}} & N/A \\

        DQN Zoo~\citep{dqnzoo2020github} & WR\footnote{\url{https://github.com/google-deepmind/dqn_zoo/blob/45061f4bbbcfa87d11bbba3cfc2305a650a41c26/dqn_zoo/replay.py\#L158}} & WR\footnote{\url{https://github.com/google-deepmind/dqn_zoo/blob/45061f4bbbcfa87d11bbba3cfc2305a650a41c26/dqn_zoo/replay.py\#L559}} \\

        OpenAI Baselines~\citep{baselines} & WR\footnote{\url{https://github.com/openai/baselines/blob/ea25b9e8b234e6ee1bca43083f8f3cf974143998/baselines/deepq/replay_buffer.py\#L67}} & ST\footnote{\url{https://github.com/openai/baselines/blob/ea25b9e8b234e6ee1bca43083f8f3cf974143998/baselines/deepq/replay_buffer.py\#L112}} \\

        PFRL~\citep{chainerrl} & WOR\footnote{\url{https://github.com/pfnet/pfrl/blob/c8cb3328899509be9ca68ca9a6cd2bcfce95725c/pfrl/collections/random_access_queue.py\#L101}} & WOR\footnote{\url{https://github.com/pfnet/pfrl/blob/c8cb3328899509be9ca68ca9a6cd2bcfce95725c/pfrl/collections/prioritized.py\#L303}} \\

        Ray RLlib~\citep{liang2018rllib} & WR\footnote{\url{https://github.com/ray-project/ray/blob/db419295a60657fc2be7e0e8053cea9ee8b82fbd/rllib/utils/replay_buffers/replay_buffer.py\#L314}} & WR\footnote{\url{https://github.com/ray-project/ray/blob/db419295a60657fc2be7e0e8053cea9ee8b82fbd/rllib/utils/replay_buffers/prioritized_replay_buffer.py\#L88}} \\

        Stable-Baselines3~\citep{stable-baselines3} & WR\footnote{\url{https://github.com/DLR-RM/stable-baselines3/blob/fa21bce04ee625c67f6ea2a7678bf46c39cd226c/stable_baselines3/common/buffers.py\#L114}} & N/A \\

        Tianshou~\citep{tianshou} & WR\footnote{\url{https://github.com/thu-ml/tianshou/blob/c4ae7cd924cc76b3c3f456008ce4df80d3d8c746/tianshou/data/buffer/base.py\#L502}} & WR\footnote{\url{https://github.com/thu-ml/tianshou/blob/0a79016cf6f6c7f44aa5d6a1af36a96247bb1e78/tianshou/data/buffer/prio.py\#L65}} \\

        TorchRL~\citep{bou2023torchrl} & WR\footnote{\url{https://github.com/pytorch/rl/blob/21c4d87c7ba5592a731757411468d080b7173b5f/torchrl/data/replay_buffers/storages.py\#L159}}\footnote{Additionally, TorchRL also supports \texttt{SamplerWithoutReplacement} sampler, which resembles our RR-C. However, it resets its list of shuffled indices every time the storage is expanded, thus making it unsuitable for a replay buffer of a growing size, which is typical in off-policy online RL training. \url{https://github.com/pytorch/rl/blob/21c4d87c7ba5592a731757411468d080b7173b5f/torchrl/data/replay_buffers/samplers.py\#L162}} & WR\footnote{\url{https://github.com/pytorch/rl/blob/21c4d87c7ba5592a731757411468d080b7173b5f/torchrl/data/replay_buffers/samplers.py\#L497}} \\
        \bottomrule
    \end{tabular}
    \end{minipage}
\end{table}

\section{Theoretical results}
\label{sec:theory}

In this section, we give theoretical results with proofs on our proposed sampling methods, RR-C and RR-M.

\subsection{Notations}

We denote by $C$ the capacity of the replay buffer and by $B$ the minibatch size.
For every timestep $t=0,1,\dots$ we add a new transition $T_t$ to the buffer, and, if the buffer contains at least $R$ transitions, sample a minibatch of size $B$ from the buffer.
$B \leq R \leq C$ always holds.
For prioritized settings, let $p_i$ be the priority of transition $T_i$ in the buffer.

The number of times transition $T_i$ is sampled up to timestep $t=k$ is a random variable, denoted $X_{i,k}$, where randomness comes from the sampling procedure.
Since $T_i$ is added to the replay buffer at timestep $t=i$ and dropped at timestep $t=i+C$, $X_{i,k} = 0$ for all $k < i$ and $X_{i,k} = X_{i,i+C-1}$ for all $k \geq i + C - 1$.
Let $t_{\mathrm{start}}(i) \coloneq \max\{i,\;R-1\}$ and $t_{\mathrm{end}}(i,k) \coloneq \min\{k,\;i+C-1\}$.
The number of timesteps on which minibatches are actually drawn while
$T_i$ is present is $N_{i,k} \coloneq \max\{0,\; t_{\mathrm{end}}(i,k) - t_{\mathrm{start}}(i) + 1\}$, and we define $S_{i,k} \coloneq B \cdot N_{i,k}$.
In particular, when $i\ge C$ we have $t_{\mathrm{start}}(i)=i$ and $t_{\mathrm{end}}(i,k)=\min\{k,i+C-1\}$, so $S_{i,k}=B\cdot\min(k-i+1, C)$, which is the expression we will use in the proofs below.

We represent which sampling method is used by a superscript of a random variable, so $X_{i,k}^{\text{UER}}$, $X_{i,k}^{\text{PER}}$, $X_{i,k}^{\text{RR-C}}$, and $X_{i,k}^{\text{RR-M}}$ denote the number of times transition $T_i$ is sampled up to timestep $t=k$ when using uniform experience replay with with-replacement sampling (UER), prioritized experience replay with with-replacement sampling (PER), RR-C, and RR-M, respectively.

\subsection{Bias of RR-C}

First, we show that RR-C can be biased when the skipping mechanism in the index buffer (\texttt{rr\_buffer} in \cref{fig:uer_code}) is activated.
\begin{theorem}
There exists a configuration such that $\E \left[ X_{i,k}^{\text{RR-C}} \right] \ne \E \left[ X_{i,k}^{\text{UER}} \right]$ for some $(i, k)$.
\end{theorem}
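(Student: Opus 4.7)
The statement is an existence claim, so the plan is simply to exhibit a small configuration where the skip step in RR-C creates a measurable bias. The intuition is that when \texttt{rr\_buffer} pops an index that does not yet correspond to any transition, the index is discarded without producing a sample; this couples the state of \texttt{rr\_buffer} with the history of how the buffer has been filled, which breaks the per-timestep uniformity that UER trivially enjoys. The smallest setting in which this interaction can fire is one in which the replay buffer is still being filled while sampling is already happening, so I will take $C=2$, $R=1$, $B=1$ and compare $\E\!\left[X_{0,1}^{\text{UER}}\right]$ with $\E\!\left[X_{0,1}^{\text{RR-C}}\right]$.

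First I would compute the UER side directly. At $t=0$ only $T_0$ is in the buffer, so \texttt{sample\_with\_replacement} must return index $0$; at $t=1$ both $T_0$ and $T_1$ are present and drawn uniformly, contributing $1/2$ to $T_0$ in expectation. Summing yields $\E\!\left[X_{0,1}^{\text{UER}}\right] = 1 + 1/2 = 3/2$.

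Next I would compute the RR-C side by conditioning on the initial random permutation pushed into \texttt{rr\_buffer} at $t=0$; the two orders of $[0,1]$ are equally likely. In the order where $0$ is popped first, $T_0$ is sampled at $t=0$ and \texttt{rr\_buffer} retains $[1]$; then at $t=1$ the next pop returns $1$, so $T_1$ is sampled and $T_0$ contributes nothing. In the order where $1$ is popped first, the skip branch fires because index $1$ does not yet refer to a stored transition, then $0$ is popped and $T_0$ is sampled; this empties \texttt{rr\_buffer}, so at $t=1$ a fresh shuffle of $[0,1]$ is drawn and $T_0$ is chosen with probability $1/2$. Averaging the two branches gives $\E\!\left[X_{0,1}^{\text{RR-C}}\right] = 1 + \tfrac{1}{2}\cdot 0 + \tfrac{1}{2}\cdot\tfrac{1}{2} = 5/4$, and $5/4 \ne 3/2$ concludes the proof.

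There is no substantive obstacle: the whole argument reduces to correctly carrying the state of \texttt{rr\_buffer} across the boundary between $t=0$ and $t=1$, which the two-case enumeration above handles explicitly. The only thing worth flagging is that the same configuration also shows the bias vanishes rapidly; once $t \ge C-1$ every popped index corresponds to a real transition, so the skip branch never fires again, which is exactly the regime in which the companion unbiasedness/variance theorems are stated.
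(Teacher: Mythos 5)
Your proposal is correct and is essentially identical to the paper's proof: same configuration ($B=R=1$, $C=2$, $(i,k)=(0,1)$), same case analysis on the two initial permutations and the regenerated buffer, and the same values $\E\left[X_{0,1}^{\text{UER}}\right]=3/2$ versus $\E\left[X_{0,1}^{\text{RR-C}}\right]=5/4$.
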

\begin{proof}
Let $B=R=1, C=2$ and $(i, k) = (0, 1)$.

In uniform experience replay, $T_0$ is sampled at $k=0$ with probability $1$ and at $k=1$ with probability $1/2$, so $\E \left[ X_{0,1}^{\text{UER}} \right] = 1 + 1/2 = 3/2$.

In RR-C, at $t=0$, the index buffer is initialized as either $[0,1]$ or $[1,0]$ with probability $1/2$ each.
\begin{itemize}
    \item (a) If the index buffer is $[0,1]$, then $T_0$ is sampled at $t=0$, leaving the index buffer as $[1]$. At $t=1$, $T_1$ is sampled.
    \item (b) If the index buffer is $[1,0]$, then $1$ is skipped and $T_0$ is sampled at $t=0$. Now that the index buffer is empty, the index buffer is regenerated as either $[0,1]$ or $[1,0]$ with probability $1/2$ each.
    \begin{itemize}
        \item (b1) If the regenerated index buffer is $[0,1]$, then $T_0$ is sampled at $t=1$.
        \item (b2) If the regenerated index buffer is $[1,0]$, then $T_1$ is sampled at $t=1$.
    \end{itemize}
\end{itemize}
(a), (b1), and (b2) occur with probability $1/2$, $1/4$, and $1/4$, respectively.
Thus, $\E \left[ X_{0,1}^{\text{RR-C}} \right] = 1 \cdot (1/2 + 1/4) + 2 \cdot (1/4) = 5/4 \ne 3/2$.
\end{proof}

However, RR-C is unbiased for sufficiently large $i$ so that the skipping mechanism is never activated.
\begin{theorem}[store*=rrc_bias]
$\E \left[ X_{i,k}^{\text{RR-C}} \right] = \E \left[ X_{i,k}^{\text{UER}} \right]$ holds for all $(i, k)$ for sufficiently large $i$.
\end{theorem}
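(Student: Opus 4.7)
The plan is to identify a deterministic threshold $i^\star$ such that for all $i \ge i^\star$ the RR-C sampler is in a ``clean regime'' throughout the entire lifetime $[i, i+C-1]$ of $T_i$, and then show that in this regime the expected sample count of any particular index equals $S_{i,k}/C$, matching UER. For the clean regime, once $t \ge C-1$ the main buffer is full, so every popped index is valid and no index is ever discarded by the skip test in \cref{fig:uer_code}. Since the index buffer \texttt{rr\_buffer} holds at most $C$ items and loses at least $B$ of them per timestep, the \texttt{rr\_buffer} instance in use when the main buffer fills is fully drained and replaced by a fresh uniform random permutation within at most $\lceil C/B\rceil$ timesteps. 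Setting $i^\star = C - 1 + \lceil C/B\rceil$, for every $i \ge i^\star$ we have $t_{\mathrm{start}}(i) = i$, no skipping occurs during $[i, i+C-1]$, and every refresh of \texttt{rr\_buffer} in that window produces an i.i.d.\ uniform random permutation of $\{0,\dots,C-1\}$.

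Under UER with a full buffer of size $C$, each of the $S_{i,k} = B \cdot N_{i,k}$ individual samples drawn in the window is chosen uniformly with replacement from $\{0,\dots,C-1\}$, so by linearity $\E[X_{i,k}^{\text{UER}}] = S_{i,k}/C$. For RR-C in the clean regime, let $\pi_1, \pi_2, \dots$ denote the i.i.d.\ uniform random permutations used from the start of the clean regime onward, and enumerate the $S_{i,k}$ samples drawn during $[i, t_{\mathrm{end}}(i,k)]$ by a sample index $s$. The $s$-th sample equals $\pi_{m(s)}(p(s))$, where the map $s \mapsto (m(s), p(s))$ is deterministic once the clean regime has begun, because it depends only on how many pops and refreshes have occurred, not on the values popped. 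For any fixed $(m, p)$ and $j$, $\Pr[\pi_m(p) = j] = 1/C$ by the uniformity of $\pi_m$. Summing marginal probabilities over the $S_{i,k}$ samples gives $\E[X_{i,k}^{\text{RR-C}}] = S_{i,k}/C$, which matches UER and completes the proof.

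The main obstacle is that positions within a single permutation are strongly correlated (each value appears exactly once), which might suggest that conditioning on the partial state of \texttt{rr\_buffer} at time $i$ perturbs the marginal distribution of subsequent samples. The resolution is that first-moment calculations require only marginal probabilities: because the schedule $s \mapsto (m(s), p(s))$ is determined by a deterministic bookkeeping process independent of the permutation values, the identity $\Pr[\pi_{m(s)}(p(s)) = j] = 1/C$ needs no conditioning on past draws. Within-permutation correlations do affect higher moments---and are exactly the source of the variance reduction of RR-C over UER---but they leave the first moment unchanged.
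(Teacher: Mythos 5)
Your proof is correct and follows essentially the same route as the paper's: both first establish a threshold past which the skip mechanism is never triggered (bounded by the $\lceil C/B\rceil$ timesteps needed to drain the index buffer once the main buffer is full), and then show that each of the $S_{i,k}$ draws hits $T_i$ with marginal probability $1/C$. The only cosmetic difference is that the paper groups the draws into full and partial epochs via $S_{i,k}=l+mC+n$, whereas you sum per-draw marginals over a deterministic schedule; your explicit remark that the schedule map $s\mapsto(m(s),p(s))$ is independent of the popped values is a slightly more careful justification of the same underlying fact.
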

\begin{proof}

First, we note that for timestep $t \ge C$, the replay buffer is always full.
To exhaust the whole index buffer of size $C$, we need $\lceil C / B \rceil$ minibatches, i.e. $\lceil C / B \rceil$ timesteps.
Thus, there must be a timestep $t^*$ in $C \leq t^* \leq C + \lceil C / B \rceil$ such that the index buffer is regenerated at $t^*$ and there is no skipping thereafter.
For any $i > t^\star$ and any $k\ge i$, all minibatch draws that can include $T_i$ occur at timesteps $t\in[i, \min\{k,i+C-1\}]$ with $t>t^\star$, so the skipping mechanism never affects $X^{\text{RR-C}}_{i,k}$.

In uniform experience replay, as long as the buffer is full, i.e., $i \geq C$, when we sample a transition, the probability of it being transition $T_i$ is $1 / C$ as long as it is in the buffer.
Thus, for $i \geq C$, $\E \left[ X_{i,k}^{\text{UER}} \right] = S_{i,k} \cdot (1 / C)$ for all $k \geq i$.

To analyze the case of RR-C, we decompose $S_{i,k}$ as $S_{i,k} = l + m C + n$, where $m$ is a number of times the index buffer is fully exhausted, $l$ is the number of samples taken before the first exhaustion, and $n$ is the number of samples taken after the last exhaustion.
We can consider the index buffer is uniformly shuffled when $i$ is sufficiently large, as argued above.
Thus, the expected number of times $T_i$ appears in the first $l$ draws is $l/C$, and in the last $n$ draws is $n/C$.
Each of the $m$ full exhaustions samples $T_i$ exactly once.
Therefore, $\E \left[ X_{i,k}^{\text{RR-C}} \right] = l / C + m + n / C = (l + m * C + n) / C = S_{i,k} / C = \E \left[ X_{i,k}^{\text{UER}} \right]$.
\end{proof}

Thus, RR-C can be biased only early in training.
We have not quantified the bias further, but our numerical simulations in \cref{fig:sample_count,fig:sample_count_bs8,fig:sample_count_x10,fig:sample_count_x10_bs8} suggest that the bias is negligible in practice.

\subsection{Variance of RR-C}

Next, we show that RR-C achieves lower variance in the number of times each transition is sampled compared to uniform experience replay under practical conditions.

\begin{theorem}[store*=rrc_var]
$\Var \left[ X_{i,k}^{\text{RR-C}} \right] \le \Var \left[ X_{i,k}^{\text{UER}} \right]$ holds for all $(i, k)$ such that $i \le k$ with sufficiently large $i$.
The equality holds only under a few specific conditions.
\end{theorem}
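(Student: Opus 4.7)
The plan is to leverage the decomposition $S_{i,k}=l+mC+n$ from the proof of the preceding unbiasedness theorem, where $l$ draws come from an initial partial shuffle of the index buffer, $m$ complete shuffles are fully exhausted, and $n$ draws come from a final partial shuffle. Once $i$ is large enough that the buffer is full throughout $[i,\min\{k,i+C-1\}]$ and the skipping mechanism never fires, these three blocks contribute independently to $X_{i,k}^{\text{RR-C}}$, which makes a closed-form variance comparison possible.

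First I compute each variance. For UER, the $S_{i,k}$ draws are i.i.d.\ $\mathrm{Bernoulli}(1/C)$ for the event ``the sampled index equals $j_i \coloneq i \bmod C$'', so $X_{i,k}^{\text{UER}} \sim \mathrm{Binomial}(S_{i,k},1/C)$ and $\Var[X_{i,k}^{\text{UER}}] = S_{i,k}(C-1)/C^2$. For RR-C, I write $X_{i,k}^{\text{RR-C}} = Y_l + m + Y_n$, where $Y_l$ is the indicator that $j_i$ appears among the $l$ initial pops and $Y_n$ is the analogous indicator for the $n$ final pops. Because any contiguous block of $l$ positions in a uniform random permutation of $\{0,\dots,C-1\}$ forms a uniform random subset of size $l$, we have $Y_l \sim \mathrm{Bernoulli}(l/C)$ and likewise $Y_n \sim \mathrm{Bernoulli}(n/C)$. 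Each of the $m$ complete shuffles contributes exactly $1$ deterministically, with zero variance. Successive reshuffles are independent, so $Y_l$ and $Y_n$ are independent, yielding $\Var[X_{i,k}^{\text{RR-C}}] = l(C-l)/C^2 + n(C-n)/C^2$.

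Second, I subtract and simplify. A direct expansion gives
\[
\Var\!\left[X_{i,k}^{\text{UER}}\right] - \Var\!\left[X_{i,k}^{\text{RR-C}}\right] = \frac{l(l-1) + n(n-1) + mC(C-1)}{C^2},
\]
in which every summand is nonnegative for integers $l,n,m \geq 0$ and $C \geq 2$; this establishes the inequality. For the equality characterization, all three summands must vanish simultaneously: $l,n \in \{0,1\}$ and $m=0$. This forces $S_{i,k} \leq 2$, so equality is confined to degenerate edge cases with at most two sample draws occurring while $T_i$ resides in the buffer (e.g., $B=1$ and $N_{i,k}\leq 2$, or $B=2$ and $N_{i,k}=1$, or the trivial $N_{i,k}=0$).

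The main technical point to be careful about is the claim that the $l$ initial pops behave like a uniformly random subset of size $l$ of $\{0,\dots,C-1\}$. Although the current shuffle may have been partially consumed by timesteps preceding $i$ for transitions other than $T_i$, the remaining elements of a uniform random permutation still form a uniform random subset of the appropriate size, and popping the next $l$ of them preserves this uniformity by the symmetry of the uniform permutation. Combined with the independence of successive reshuffles, this is what makes the three-block decomposition of $X_{i,k}^{\text{RR-C}}$ clean and the variance comparison reduce to elementary nonnegativity.
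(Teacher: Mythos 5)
Your proposal is correct and follows essentially the same route as the paper's proof: the same decomposition $S_{i,k}=l+mC+n$, the same Bernoulli variances $(l/C)(1-l/C)+(n/C)(1-n/C)$ for RR-C versus the binomial variance for UER, and the same difference formula $\bigl(l(l-1)+n(n-1)+mC(C-1)\bigr)/C^2$ with the same equality characterization (up to the trivial $C=1$ case, which the paper lists separately). Your added justification that the $l$ initial pops form a uniform random subset of size $l$ is a welcome explicit statement of a step the paper leaves implicit.
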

\begin{proof}
As in the proof of unbiasedness of RR-C, we consider the case where $i$ is sufficiently large so that there is no skipping in the index buffer.

In uniform experience replay, whether each sample is $T_i$ is an independent Bernoulli trial with success probability $1 / C$.
Since we draw $S = B \cdot \min(k - i + 1, C)$ samples from $t=i$ to $t=k$, we obtain $\Var \left[ X_{i,k}^{\text{UER}} \right] = S_{i,k} \cdot (1 / C) \cdot (1 - 1 / C)$.

In RR-C, we again use the decomposition $S_{i,k} = l + m C + n$ from the proof of unbiasedness of RR-C.
The variance of $X_{i,k}^{\text{RR-C}}$ comes from the first $l$ samples and the last $n$ samples, as the $m$ full exhaustions always include $T_i$.
The event that $T_i$ appears in the first $l$ draws and the event that it appears in the last $n$ draws correspond to different epochs (different independent permutations of the index buffer), so these two Bernoulli variables are independent.
Thus, $\Var \left[ X_{i,k}^{\text{RR-C}} \right] = (l / C) \cdot (1 - l / C) + (n / C) \cdot (1 - n / C)$.

Now we compare the two variances.
\begin{align*}
\Var \left[ X_{i,k}^{\text{UER}} \right] - \Var \left[ X_{i,k}^{\text{RR-C}} \right]
&= S_{i,k} \cdot (1 / C) \cdot (1 - 1 / C) - (l / C) \cdot (1 - l / C) - (n / C) \cdot (1 - n / C) \\
&= \frac{l (l - 1) + n (n - 1) + m C (C - 1)}{C^2}
\end{align*}
Since $0 \le l < C, 0 \le m, 0 \le n < C$, we obtain $\Var \left[ X_{i,k}^{\text{RR-C}} \right] \le \Var \left[ X_{i,k}^{\text{UER}} \right]$, where the equality holds only under either of the following conditions: (a) $C=1$ or (b) $l \in \{0,1\}, m=0, n \in \{0,1\}$.
\end{proof}

The condition (a) is trivial as the buffer can contain only one transition.
The condition (b) corresponds to the case where (b1) $i = k$ or (b2) $i + 1 = k$ and the index buffer is regenerated immediately after $t=i$.
As we are interested in the variance over multiple timesteps, the condition (b) is not relevant in practice.

Our numerical simulations in \cref{fig:sample_count,fig:sample_count_bs8,fig:sample_count_x10,fig:sample_count_x10_bs8} support this theoretical result, showing that RR-C achieves lower variance throughout training.
While our analysis does not cover early timesteps, the variance for the transitions from early timesteps is also reduced compared to uniform experience replay in our simulations.

\subsection{Bias of RR-M}

Similarly to RR-C, we show that RR-M can be biased.
\begin{theorem}
There exists a configuration such that $\E \left[ X_{i,k}^{\text{RR-M}} \right] \ne \E \left[ X_{i,k}^{\text{PER}} \right]$ for some $(i, k)$.
\end{theorem}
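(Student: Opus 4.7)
The plan is to exhibit a minimal explicit configuration in which the masking bookkeeping strictly distorts the marginal of a later sample. I would take $B=1$, $R=1$, $C=3$, with all priorities fixed to a common positive constant (so PER reduces to uniform with-replacement sampling), and focus on $(i,k)=(0,2)$. Under these choices the buffer is never overwritten up to $t=2$, so none of the reset/rescaling logic for overwritten transitions fires; the only source of discrepancy between RR-M and PER is the masking that can activate at $t=2$.

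First I would compute $\E[X_{0,2}^{\text{PER}}]$ directly as $1 + \tfrac{1}{2} + \tfrac{1}{3} = \tfrac{11}{6}$, by summing the probabilities that $T_0$ is drawn at $t=0,1,2$ from buffers of sizes $1,2,3$. Next I would trace RR-M step by step, carrying the vectors \texttt{actual\_counts} and \texttt{expected\_counts}. The forced draw of $T_0$ at $t=0$ leaves the two vectors equal, so no mask is set at $t=1$ and $T_0,T_1$ are drawn uniformly. I would then condition on which transition is drawn at $t=1$: in either branch the drawn transition ends up strictly oversampled (actual $=1$ or $2$ versus expected $=1/2$ or $3/2$), and so is the one masked at $t=2$. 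Computing the probability that $T_0$ is drawn at $t=2$ in each branch, weighting by the $t=1$ branch probabilities of $1/2$, and carrying the masking multiplier $\epsilon$ symbolically, I would obtain
\begin{equation*}
\E\!\left[X_{0,2}^{\text{RR-M}}\right] \;=\; \tfrac{3}{2} + \frac{\epsilon+1}{2(\epsilon+2)}.
\end{equation*}
Setting this equal to $\tfrac{11}{6}$ forces $\epsilon=1$; for every $\epsilon\neq 1$ (in particular for the value $10^{-8}$ used in the paper, or the limit $\epsilon=0$) the two expectations differ, which establishes the claim.

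The only real obstacle is bookkeeping care. The oversampled predicate in \cref{fig:per_code} uses a strict inequality, so I would verify in each $t=1$ branch that actual $>$ expected truly holds strictly (which it does, with room to spare). And since masking does not zero a priority but multiplies it by $\epsilon$, I would carry $\epsilon$ symbolically throughout, which produces the single rational expression above and makes the conclusion transparent for the parameter value actually used in the paper.
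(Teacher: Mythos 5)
Your proposal is correct and takes essentially the same approach as the paper: an explicit tiny configuration ($B=1$, a three-transition buffer, $(i,k)=(0,2)$) in which you trace the actual/expected count bookkeeping and show the masking at $t=2$ shifts the marginal probability of drawing $T_0$. The only differences are cosmetic—the paper uses $R=2$ with non-uniform priorities $(0.6,0.4,0)$ and treats the mask as exactly zero, whereas you use uniform priorities and carry the masking multiplier $\epsilon$ symbolically, which has the minor added benefit of covering the $10^{-8}$ value used in the experiments.
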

\begin{proof}
Let $B=1, R=2, C=3, p_0=0.6, p_1=0.4, p_2=0.0$ and $(i, k) = (0, 2)$.

In prioritized experience replay, $T_0$ is sampled at $t=1,2$ with probability $0.6$ so $\E \left[ X_{0,2}^{\text{PER}} \right] = 0.6 + 0.6 = 1.2$.

In RR-M, at $t=1$, $T_0$ is sampled with probability $0.6$ and $T_1$ with $0.4$.
\begin{itemize}
    \item (a) If $T_0$ is sampled at $t=1$, then the expected and the actual counts are $[0.6, 0.4], [1, 0]$, respectively. At $t=2$, since the actual count of $T_0$ is greater than the expected count, $T_0$ is masked out and $T_1$ is sampled.
    \item (b) If $T_1$ is sampled at $t=1$, then the expected and the actual counts are $[0.6, 0.4], [0, 1]$, respectively. At $t=2$, since the actual count of $T_1$ is greater than the expected count, $T_1$ is masked out and $T_0$ is sampled.
\end{itemize}
Thus, $\E \left[ X_{0,2}^{\text{RR-M}} \right] = 0.6 \cdot 1 + 0.4 \cdot 1 = 1 \ne 1.2$.
\end{proof}
Unlike RR-C, this bias can be nonzero even for large $i$, e.g., when $p_i = 0.6, p_{i+1} = 0.4$ and other priorities are zero, the same argument as above applies for all $(i, i + 1)$.

Due to the dynamic masking mechanism of RR-M, it is difficult to analyze its bias accurately.
Below, we consider a simplified setting where we stop adding a new transition somewhere after $t=i$ so that $T_i$ is forever kept in the buffer. We also assume $B=1$ for simplicity.
\begin{lemma}
\label[lemma]{thm:rrm_deviation_bound}
$1-C < X_{i,k}^{\text{RR-M}} - \E \left[ X_{i,k}^{\text{PER}} \right] < 1$ holds for all $(i, k)$ under a simplified setting.
\end{lemma}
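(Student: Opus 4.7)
The plan is to control the deterministic discrepancy $d_j(t) := a_j(t) - e_j(t)$ between actual and expected sample counts maintained by RR-M, prove uniform-in-$t$ bounds on $d_j$, and then identify $e_i(k)$ with $\E[X_{i,k}^{\text{PER}}]$ under the simplified setting.

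First, I would establish the upper bound $d_j(t) < 1$ from the masking rule: $T_j$ can only be drawn at step $t$ when $a_j(t-1) \le e_j(t-1)$, and the draw increases $a_j$ by $1$ while raising $e_j$ by the strictly positive masked probability $q_j^{\text{masked}}(t)$, giving $d_j(t) < 1$ immediately after the draw. Steps that do not sample $T_j$ leave $a_j$ unchanged while $e_j$ weakly grows, so $d_j$ is non-increasing on such steps. Since $d_j(t_{\mathrm{start}}(j)) = 0$ at initialization, the invariant $d_j(t) < 1$ is preserved for every $j$ and every $t$.

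Next, I would use a conservation identity. In the simplified setting we have $B = 1$ and the buffer is frozen at $C$ transitions, so every step draws exactly one sample and the masked probabilities sum to one (the tiny $\varepsilon$ safeguard shown in \cref{fig:per_code} prevents a degenerate all-masked step). Therefore $\sum_j a_j(t) = \sum_j e_j(t)$ and hence $\sum_j d_j(t) = 0$. Combining with the pointwise upper bound yields $d_i(t) = -\sum_{j \ne i} d_j(t) > -(C-1) = 1 - C$, so $1 - C < d_i(k) < 1$ for every $k$.

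The remaining and most delicate step is to replace $e_i(k)$ with $\E[X_{i,k}^{\text{PER}}]$. Because $e_i$ is updated using masked rather than raw priorities, its individual increments need not match the PER increments $p_i / \sum_j p_j$; the identification has to be argued aggregately, using $\sum_j e_j(t) = \sum_j \E[X_{j,t}^{\text{PER}}]$ together with the simplified setting's stationarity assumptions to absorb the masking redistribution. This aggregate-to-pointwise passage is the main obstacle, and it is the point at which the frozen buffer is really used; without it, the bound would only apply to $X_{i,k}^{\text{RR-M}} - e_i(k)$ rather than to $X_{i,k}^{\text{RR-M}} - \E[X_{i,k}^{\text{PER}}]$.
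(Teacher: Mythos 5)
Your core argument coincides with the paper's: the masking rule yields the pointwise invariant $a_j(t) < e_j(t)+1$ for every $j$, and the conservation identity $\sum_j a_j(t) = \sum_j e_j(t)$ (each sampling step adds exactly one to both totals when $B=1$) converts that upper bound into the lower bound $d_i(t) > 1-C$. The only place you diverge is the final step, and the obstacle you flag there is not real: you assume the expected counts are incremented by the \emph{masked} probabilities, but in RR-M they are incremented by the original, unmasked probabilities $p_j/\sum_l p_l$ --- see the example trajectories in \cref{tab:rrm_example}, where the expected counts grow by $[0.29,\,0.14,\,0.57]$ at every iteration regardless of which transitions are currently masked. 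Consequently $e_i(k)$ is, by construction, the cumulative PER sampling probability of $T_i$, i.e.\ exactly $\E\left[X_{i,k}^{\text{PER}}\right]$ in the simplified setting, so no aggregate-to-pointwise passage is needed; the paper treats this identification as definitional. The same misreading appears harmlessly in your upper-bound step, where the increment to $e_j$ after a draw is again the unmasked probability (still strictly positive, so the strict inequality $d_j(t)<1$ survives). With that correction your proposal is complete and matches the paper's proof.
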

\begin{proof}
Define $A_i(t)$ and $E_i(t)$ as the actual and expected counts of $T_i$ at timestep $t$ after increment, respectively.
Then, we have $X_{i,k}^{\text{RR-M}} = A_i(k)$ and $\E \left[ X_{i,k}^{\text{PER}} \right] = E_i(k)$.

By construction of RR-M, $T_i$ is never sampled at $t$ when $A_i(t-1) > E_i(t-1)$, which prevents $A_i(t)$ to be incremented further.
Thus, we have $A_i(t) < E_i(t) + 1$ for all $t$, which implies the upper bound, $X_{i,k}^{\text{RR-M}} - \E \left[ X_{i,k}^{\text{PER}} \right]< 1$.

Also, $\sum_i A_i(t) = \sum_i E_i(t)$ must hold for any $t$, since both actual and expected counts are incremented by exactly one in total at each timestep.
Using this fact and $A_i(t) < E_i(t) + 1$, we have the lower bound $1-C < X_{i,k}^{\text{RR-M}} - \E \left[ X_{i,k}^{\text{PER}} \right]$
\end{proof}

Thus, under the simplified setting, the empirical sampling frequency of RR-M converges to that of PER, in the sense of vanishing relative error.
\begin{theorem}
\label{thm:rrm_bias}
$\lim_{k \to \infty} \frac{X_{i,k}^{\text{RR-M}}}{\E \left[ X_{i,k}^{\text{PER}} \right]} = 1$ holds almost surely for all $i$ under a simplified setting.
\end{theorem}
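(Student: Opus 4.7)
The plan is to combine the deterministic deviation bound of \Cref{thm:rrm_deviation_bound} with a lower bound on the growth of $\E[X_{i,k}^{\text{PER}}]$. Crucially, the lemma's inequality $1 - C < X_{i,k}^{\text{RR-M}} - \E[X_{i,k}^{\text{PER}}] < 1$ is a pathwise statement (it is forced by the masking rule together with the conservation identity $\sum_i A_i(t) = \sum_i E_i(t) = t$), not a statement in probability, so the ``almost surely'' in the conclusion will drop out for free once pointwise convergence of the ratio is established.

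First, I would write
\[
\frac{X_{i,k}^{\text{RR-M}}}{\E[X_{i,k}^{\text{PER}}]} \;=\; 1 \;+\; \frac{X_{i,k}^{\text{RR-M}} - \E[X_{i,k}^{\text{PER}}]}{\E[X_{i,k}^{\text{PER}}]},
\]
and apply \Cref{thm:rrm_deviation_bound} to bound the numerator of the second term by $C - 1$ in absolute value, uniformly in $k$ and in the realization. Next, I would argue the denominator diverges. Under the simplified setting, once the buffer stops growing at some timestep $t_0 \ge i$, the transition $T_i$ is permanently retained with a fixed positive priority $p_i$, so its per-step PER sampling probability is bounded below by $c \coloneq p_i / \sum_j p_j > 0$ at every step $t > t_0$. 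Summing gives $\E[X_{i,k}^{\text{PER}}] \ge c(k - t_0)$ for all $k \ge t_0$, which tends to $+\infty$. Combining the two steps,
\[
\left| \frac{X_{i,k}^{\text{RR-M}}}{\E[X_{i,k}^{\text{PER}}]} - 1 \right| \;\le\; \frac{C - 1}{c(k - t_0)} \;\longrightarrow\; 0,
\]
which is a pointwise statement and therefore holds almost surely.

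The main subtlety — not so much a technical obstacle — lies in pinning down what the ``simplified setting'' allows regarding priority updates: the denominator argument requires the PER probability on $T_i$ to be bounded below by a positive constant. Under the natural reading that priorities are frozen after $t_0$, PER reduces to i.i.d.\ categorical sampling and the lower bound $c > 0$ is immediate; if priorities continue to evolve (e.g., decay upon each sampling), one would need an auxiliary positivity assumption to rule out priorities collapsing to zero. Once the deterministic lemma is in hand and the denominator is shown to diverge, the remainder is a one-line squeeze, so no probabilistic machinery (Borel--Cantelli, strong laws, etc.) is needed.
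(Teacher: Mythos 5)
Your proposal is correct and follows essentially the same route as the paper's proof: apply the pathwise deviation bound from \cref{thm:rrm_deviation_bound}, divide by $\E[X_{i,k}^{\text{PER}}]$, and let the divergent denominator kill the error term. The only difference is that you explicitly justify why $\E[X_{i,k}^{\text{PER}}]\to\infty$ (via a positive lower bound on the sampling probability of $T_i$), a step the paper simply asserts; your caveat that this needs priorities not to collapse to zero is a fair observation about what the ``simplified setting'' must implicitly assume.
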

\begin{proof}
From \cref{thm:rrm_deviation_bound}, we have $\abs{A_i(t) - E_i(t)} < K$, where $K$ is a constant independent of $t$.
Dividing both sides by $E_i(t)$ and taking the limit $t \to \infty$, we obtain the result since $E_i(t) \to \infty$ as $t \to \infty$.
\end{proof}

While we only have this asymptotic result in a simplified setting, our numerical simulations in \cref{fig:sample_count,fig:sample_count_bs8,fig:sample_count_x10,fig:sample_count_x10_bs8} suggest that the bias is negligible in practice.

\subsection{Variance of RR-M}

Using the simplified setting as in the previous subsection, we can also bound the variance of RR-M.
\begin{lemma}
\label[lemma]{thm:rrm_var_bound}
$\Var \left[ X_{i,k}^{\text{RR-M}} \right] \le \frac{1}{4} C^2$ holds for all $(i, k)$ under a simplified setting.
\end{lemma}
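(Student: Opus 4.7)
The plan is to deduce the variance bound directly from the almost-sure deviation bound already established in \cref{thm:rrm_deviation_bound}. Since $\E[X_{i,k}^{\text{PER}}]$ is a deterministic quantity (an expectation under PER with fixed buffer contents in the simplified setting), the two-sided inequality $1-C < X_{i,k}^{\text{RR-M}} - \E[X_{i,k}^{\text{PER}}] < 1$ is equivalent to saying that $X_{i,k}^{\text{RR-M}}$ lies almost surely inside an interval of length exactly $C$, namely
\[
X_{i,k}^{\text{RR-M}} \in \bigl(\E[X_{i,k}^{\text{PER}}] + 1 - C,\; \E[X_{i,k}^{\text{PER}}] + 1\bigr).
\]

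Next, I would invoke Popoviciu's inequality on variances: any random variable supported in an interval $[a,b]$ satisfies $\Var \le (b-a)^2/4$. Applied here with $b-a = C$, this immediately yields $\Var[X_{i,k}^{\text{RR-M}}] \le C^2/4$, which is the desired bound. For self-containedness one could also give a one-line proof of Popoviciu's inequality from scratch (expanding $\Var[X] = \E[(X-c)^2] - (\E[X]-c)^2 \le \E[(X-c)^2]$ at the midpoint $c = (a+b)/2$, and noting $(X-c)^2 \le ((b-a)/2)^2$), but this is standard enough to cite.

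There is really no main obstacle here: the heavy lifting has already been done in \cref{thm:rrm_deviation_bound}, and the only substantive observation is that the deviation bound must be read as a \emph{deterministic} two-sided bound on $X_{i,k}^{\text{RR-M}}$ (the shift $\E[X_{i,k}^{\text{PER}}]$ is not random), which is precisely the hypothesis needed to invoke Popoviciu. The bound is not tight in general—one would expect a considerably smaller variance in most regimes—but $C^2/4$ is the cleanest worst-case statement obtainable from the deviation bound alone without further probabilistic analysis of the masking dynamics.
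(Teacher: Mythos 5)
Your proposal matches the paper's proof essentially verbatim: both read the two-sided deviation bound as confining $X_{i,k}^{\text{RR-M}}$ to an interval of length $C$ and then apply Popoviciu's inequality to get $C^2/4$. If anything, your version is slightly cleaner--you correctly attribute the interval to \cref{thm:rrm_deviation_bound} (the paper cites the asymptotic theorem instead) and you write the endpoints of the interval without the sign slip that appears in the paper's statement of it.
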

\begin{proof}
From \cref{thm:rrm_bias}, we have $X_{i,k}^{\text{RR-M}}  \in \left[1 - C - \E \left[ X_{i,k}^{\text{PER}} \right], 1 - \E \left[ X_{i,k}^{\text{PER}} \right] \right]$.
We apply Popoviciu's inequality on variances to $X_{i,k}^{\text{RR-M}}$.
\end{proof}
Note that this variance bound is independent of $k$, meaning that the variance does not grow with the number of samples.

We can now compare the variance of RR-M with that of prioritized experience replay.
\begin{theorem}[store*=rrm_var]
$\Var \left[ X_{i,k}^{\text{RR-M}} \right] < \Var \left[ X_{i,k}^{\text{PER}} \right]$ holds for all $(i, k)$ with sufficiently large $k$ under a simplified setting.
\end{theorem}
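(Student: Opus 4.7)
The strategy is to exploit an asymptotic gap between the two variances. \cref{thm:rrm_var_bound} already bounds $\Var[X_{i,k}^{\text{RR-M}}] \le C^2/4$ uniformly in $k$, so it suffices to show that, in the same simplified setting, $\Var[X_{i,k}^{\text{PER}}]$ eventually exceeds this constant. I would first make the simplified setting fully explicit: after some timestep beyond $i$ no new transitions are added, the priorities $\{p_j\}$ are also effectively stationary from that point onward, $B=1$, and $T_i$ stays in the buffer forever. Under this setup each draw of PER past the transient is an independent Bernoulli trial with success probability $q_i \coloneq p_i / \sum_j p_j$ for whether $T_i$ is sampled, and it is natural to exclude the degenerate cases $q_i \in \{0, 1\}$, in which both variances vanish identically and no strict inequality can hold.

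Next I would quantify the growth of $\Var[X_{i,k}^{\text{PER}}]$. Because $T_i$ never leaves the buffer and $B=1$, the number $N_{i,k}$ of draws while $T_i$ is present grows linearly in $k$. Splitting the history into the finite transient phase before the stationary regime begins and the stationary i.i.d.\ phase after, the stationary contribution to $\Var[X_{i,k}^{\text{PER}}]$ equals $(N_{i,k} - c_i)\, q_i(1-q_i)$ for some constant $c_i$ absorbing the transient, while the transient itself contributes only $O(1)$. With $q_i(1-q_i) > 0$, this shows $\Var[X_{i,k}^{\text{PER}}] \to \infty$ linearly in $k$. Combining this with the $C^2/4$ bound from \cref{thm:rrm_var_bound} yields the strict inequality for every $k$ large enough that $N_{i,k}\, q_i(1-q_i) > C^2/4$.

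The chief obstacle is not a deep estimate but careful bookkeeping of the simplified setting: one must specify it precisely enough that the PER draws become genuinely i.i.d.\ Bernoulli (so that dynamic priority updates either do not occur or preserve $q_i$), and one must explicitly rule out the degenerate cases $q_i \in \{0, 1\}$ for which the strict inequality would fail. Modulo these formalities, the argument is essentially a one-liner: a quantity that is bounded independently of $k$ must eventually be smaller than one that diverges linearly in $k$.
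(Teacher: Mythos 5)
Your proposal takes essentially the same route as the paper: the paper's proof likewise combines the uniform-in-$k$ bound $\Var[X_{i,k}^{\text{RR-M}}] \le \frac{1}{4}C^2$ from \cref{thm:rrm_var_bound} with the linear growth $\Var[X_{i,k}^{\text{PER}}] = (k-i+1)\,p_i(1-p_i)$ to conclude that the bounded quantity is eventually smaller. Your extra care in pinning down the simplified setting and excluding the degenerate cases $p_i \in \{0,1\}$ (where the PER variance vanishes and the strict inequality would fail) is a legitimate refinement that the paper's one-line proof omits.
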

\begin{proof}
It follows from \cref{thm:rrm_var_bound} and the fact that the variance of prioritized experience replay grows linearly with the number of samples as $\Var \left[ X_{i,k}^{\text{PER}} \right] = (k - i + 1) \cdot p_i \cdot (1 - p_i)$.
\end{proof}

Our numerical simulations in \cref{fig:sample_count,fig:sample_count_bs8,fig:sample_count_x10,fig:sample_count_x10_bs8} suggest that the variance of RR-M is lower than that of prioritized experience replay throughout training for more realistic settings.

\section{Example trajectories of RR-M}
\label{sec:rrm_example}

We provide example trajectories of RR-M to illustrate its behavior.
We use the simple example from \cref{sec:rrm}, where three transitions with priorities $[1, 0.5, 2]$ are in the buffer, and we sequentially sample 7 transitions.
Two trajectories simulated using different random seeds are shown in \cref{tab:rrm_example}.
Both trajectories sample transitions in different orders, but eventually they sample the first transition twice, the second once, and the third four times, which is the expected behavior of RR-M.

\begin{table}[H]
\caption{Example trajectories of RR-M, where there are three transitions with priorities $[1, 0.5, 2]$ in the buffer and we sequentially sample 7 transitions. Two trajectories simulated using different random seeds are shown.}
\label{tab:rrm_example}
\centering
\resizebox{\textwidth}{!}{
\begin{tabular}{l||ccc|ccc|ccc|ccc|ccc|ccc|ccc}
\toprule
iteration & \multicolumn{3}{c|}{0} & \multicolumn{3}{c|}{1} & \multicolumn{3}{c|}{2} & \multicolumn{3}{c|}{3} & \multicolumn{3}{c|}{4} & \multicolumn{3}{c|}{5} & \multicolumn{3}{c}{6} \\
\midrule
actual counts & 0 & 0 & 0 & 0 & 0 & 1 & 1 & 0 & 1 & 1 & 1 & 1 & 1 & 1 & 2 & 1 & 1 & 3 & 2 & 1 & 3 \\
expected counts & 0.00 & 0.00 & 0.00 & 0.29 & 0.14 & 0.57 & 0.57 & 0.29 & 1.14 & 0.86 & 0.43 & 1.71 & 1.14 & 0.57 & 2.29 & 1.43 & 0.71 & 2.86 & 1.71 & 0.86 & 3.43 \\
mask & F & F & F & F & F & T & T & F & F & T & T & F & F & T & F & F & T & T & T & T & F \\
masked probabilities & 0.29 & 0.14 & 0.57 & 0.67 & 0.33 & 0.00 & 0.00 & 0.20 & 0.80 & 0.00 & 0.00 & 1.00 & 0.33 & 0.00 & 0.67 & 1.00 & 0.00 & 0.00 & 0.00 & 0.00 & 1.00 \\
\midrule
sampled transition & \multicolumn{3}{c|}{2} & \multicolumn{3}{c|}{0} & \multicolumn{3}{c|}{1} & \multicolumn{3}{c|}{2} & \multicolumn{3}{c|}{2} & \multicolumn{3}{c|}{0} & \multicolumn{3}{c}{2} \\
\midrule
updated actual counts & 0 & 0 & 1 & 1 & 0 & 1 & 1 & 1 & 1 & 1 & 1 & 2 & 1 & 1 & 3 & 2 & 1 & 3 & 2 & 1 & 4 \\
updated expected counts & 0.29 & 0.14 & 0.57 & 0.57 & 0.29 & 1.14 & 0.86 & 0.43 & 1.71 & 1.14 & 0.57 & 2.29 & 1.43 & 0.71 & 2.86 & 1.71 & 0.86 & 3.43 & 2.00 & 1.00 & 4.00 \\
\bottomrule
\end{tabular}
}
\resizebox{\textwidth}{!}{
\begin{tabular}{l||ccc|ccc|ccc|ccc|ccc|ccc|ccc}
\toprule
iteration & \multicolumn{3}{c|}{0} & \multicolumn{3}{c|}{1} & \multicolumn{3}{c|}{2} & \multicolumn{3}{c|}{3} & \multicolumn{3}{c|}{4} & \multicolumn{3}{c|}{5} & \multicolumn{3}{c}{6} \\
\midrule
actual counts & 0 & 0 & 0 & 0 & 0 & 1 & 0 & 1 & 1 & 1 & 1 & 1 & 1 & 1 & 2 & 2 & 1 & 2 & 2 & 1 & 3 \\
expected counts & 0.00 & 0.00 & 0.00 & 0.29 & 0.14 & 0.57 & 0.57 & 0.29 & 1.14 & 0.86 & 0.43 & 1.71 & 1.14 & 0.57 & 2.29 & 1.43 & 0.71 & 2.86 & 1.71 & 0.86 & 3.43 \\
mask & F & F & F & F & F & T & F & T & F & T & T & F & F & T & F & T & T & F & T & T & F \\
masked probabilities & 0.29 & 0.14 & 0.57 & 0.67 & 0.33 & 0.00 & 0.33 & 0.00 & 0.67 & 0.00 & 0.00 & 1.00 & 0.33 & 0.00 & 0.67 & 0.00 & 0.00 & 1.00 & 0.00 & 0.00 & 1.00 \\
\midrule
sampled transition & \multicolumn{3}{c|}{2} & \multicolumn{3}{c|}{1} & \multicolumn{3}{c|}{0} & \multicolumn{3}{c|}{2} & \multicolumn{3}{c|}{0} & \multicolumn{3}{c|}{2} & \multicolumn{3}{c}{2} \\
\midrule
updated actual counts & 0 & 0 & 1 & 0 & 1 & 1 & 1 & 1 & 1 & 1 & 1 & 2 & 2 & 1 & 2 & 2 & 1 & 3 & 2 & 1 & 4 \\
updated expected counts & 0.29 & 0.14 & 0.57 & 0.57 & 0.29 & 1.14 & 0.86 & 0.43 & 1.71 & 1.14 & 0.57 & 2.29 & 1.43 & 0.71 & 2.86 & 1.71 & 0.86 & 3.43 & 2.00 & 1.00 & 4.00 \\
\bottomrule
\end{tabular}
}
\end{table}

\section{On the computational efficiency of RR-M}
\label{sec:rrm_efficiency}

As we discussed in the main text, RR-M requires $O(n)$ time every time it samples a minibatch, where $n$ is the number of transitions in the buffer.
We provide a more detailed explanation of the computational cost of RR-M and how it can be implemented efficiently.

\subsection{Computational cost of RR-M}

RR-M tracks actual and expected sample counts, updating them every time a minibatch is sampled.
Actual counts are incremented only for sampled transitions, requiring $O(b)$ time, where $b$ is the minibatch size.
In contrast, expected counts are updated for all transitions in the buffer, so it takes $O(n)$ time.
Identifying oversampled transitions involves comparing the actual and expected counts, which also requires $O(n)$ time.

There could be at most $n$ oversampled transitions, and each priority update takes $O(\log n)$ time in a sum tree.
Thus, naively masking their priorities with a single sum tree would take $O(n \log n)$ time.
This can be avoided by keeping two sum trees, one for the original priorities and the other for the masked priorities, as well as a binary mask that indicates whether each transition is currently oversampled.
Masked priorities need updating only for the transitions whose oversampled status has changed.
There are at most $b$ transitions that have switched to be oversampled, and there are on average the same number of transitions that have switched in the opposite direction.
Therefore, the time complexity of updating masked priorities can be $O(b \log n)$.

In summary, provided the minibatch size is significantly smaller than the buffer size, the time complexity of RR-M is dominated by $O(n)$ operations.

\subsection{Efficient implementation of RR-M}

Although updating expected counts and the identification of oversampled transitions each require $O(n)$ time, both tasks can be efficiently parallelized on GPUs.
We implement two sum trees, actual and expected sample counts, and a binary mask as GPU arrays using CuPy~\citep{cupy_learningsys2017}, so we can update them efficiently.
\ifpreprint
Our implementation is available at \url{https://github.com/pfnet-research/errr}.
\else
Our implementation is included in the supplementary materials.
\fi

Although we have not explored this in our experiments, further acceleration could be achieved by introducing some approximation.
For example, we could update the expected counts and identify the oversampled transitions only after every $k$ minibatches, where $k$ is a hyperparameter, assuming that the priorities of the transitions will not change significantly within $k$ minibatches.
This approach would reduce computational costs by a factor of $k$.

\section{Experimental details}

\subsection{Additional simulation results}
\label{sec:additional_simulation}

We provide additional results of experience replay simulations with different parameters in \cref{fig:sample_count_bs8,fig:sample_count_x10,fig:sample_count_x10_bs8}.

\begin{figure}[H]
    \begin{center}
        \includegraphics[width=\textwidth]{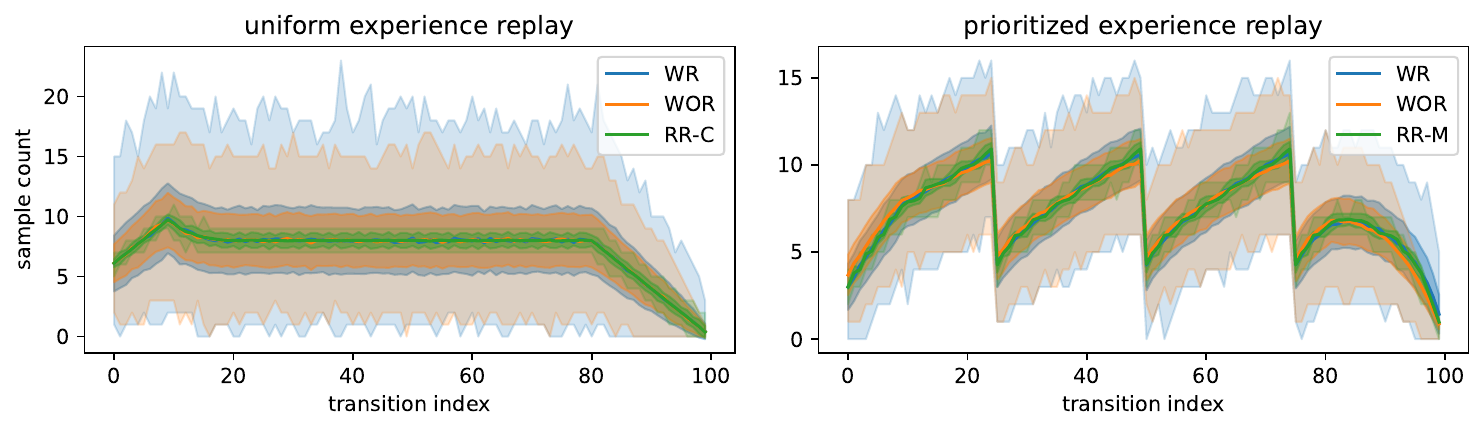}
    \end{center}
    \caption{Distributions of sample counts in experience replay simulations with a different set of parameters from \cref{fig:sample_count}: the minibatch size is 8. Format follows \cref{fig:sample_count}.}
    \label{fig:sample_count_bs8}
\end{figure}

\begin{figure}[H]
    \begin{center}
        \includegraphics[width=\textwidth]{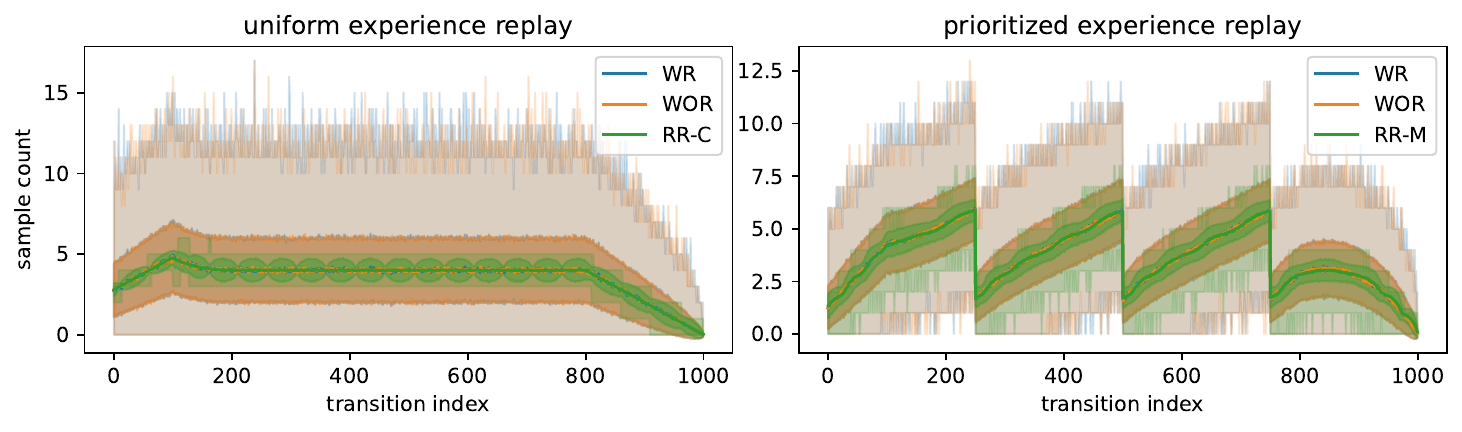}
    \end{center}
    \caption{Distributions of sample counts in experience replay simulations with a different set of parameters from \cref{fig:sample_count}: the total timesteps is 1000, the capacity of the buffer size is 200, the size at which replay starts is 100, and $p_t = (t \bmod 250) + 50$. Format follows \cref{fig:sample_count}.}
    \label{fig:sample_count_x10}
\end{figure}

\begin{figure}[H]
    \begin{center}
        \includegraphics[width=\textwidth]{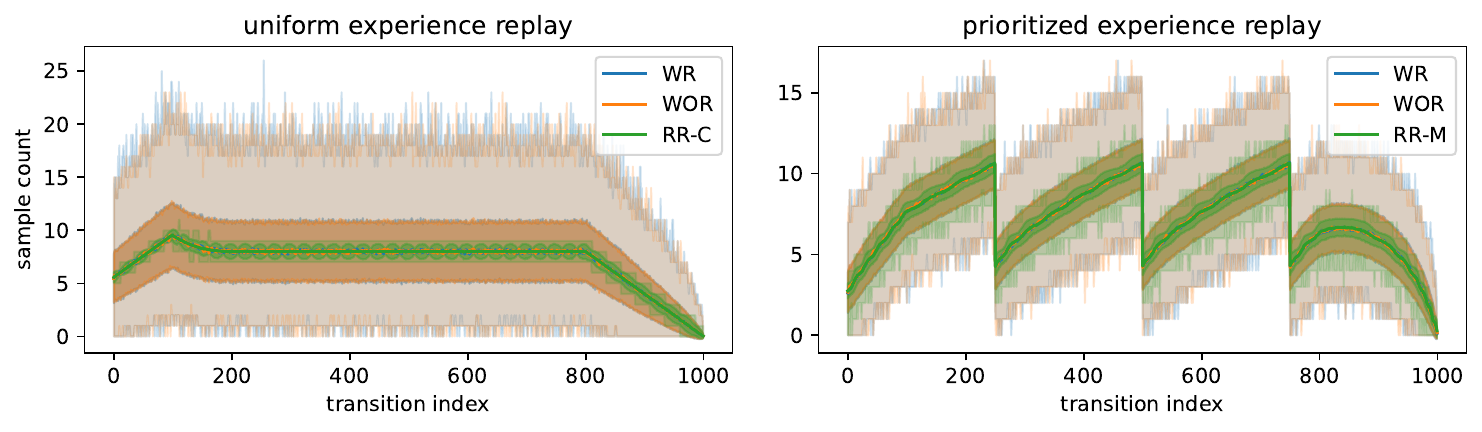}
    \end{center}
    \caption{Distributions of sample counts in experience replay simulations with a different set of parameters from \cref{fig:sample_count}: the total timesteps is 1000, the capacity of the buffer size is 200, the size at which replay starts is 100, the minibatch size is 8, and $p_t = (t \bmod 250) + 50$. Format follows \cref{fig:sample_count}.}
    \label{fig:sample_count_x10_bs8}
\end{figure}

\subsection{Experimental setup}

For deep RL experiments with uniform experience replay, we adapt CleanRL's JAX-based implementations of C51 and DQN~\citep{huang2022cleanrl}:
\begin{itemize}
    \item C51: \url{https://github.com/vwxyzjn/cleanrl/blob/1ed80620842b4cdeb1edc07e12825dff18091da9/cleanrl/c51_atari_jax.py}
    \item DQN: \url{https://github.com/vwxyzjn/cleanrl/blob/1ed80620842b4cdeb1edc07e12825dff18091da9/cleanrl/dqn_atari_jax.py}
\end{itemize}

For deep RL experiments with prioritized experience replay, we adapt the official implementation of DDQN+LAP~\citep{Fujimoto2020}: \url{https://github.com/sfujim/LAP-PAL/blob/a649b1b977e979ed0e49d67335d3c79b555e9cfb/discrete/main.py}.

We use the default hyperparameters of the original code unless otherwise noted.
Our code for the deep RL experiments is available at \url{http://github.com/pfnet-research/errr}.

\subsection{Final performance in details}

We provide the final performance of C51, DQN, and DDQN+LAP with different sampling methods in the same way as \cref{tab:c51_dqn} except that we report the mean $\pm$ standard deviations (ddof=1) in \cref{tab:c51_rrc_detailed,tab:dqn_rrc_detailed,tab:c51_wor,tab:ddqn_lap_rrm_detailed,tab:ddqn_lap_st_rrm_detailed}.

\begin{table}[H]
\centering
\caption{Final performance of C51 with with-replacement sampling (WR) and RR-C. Format follows \cref{tab:c51_dqn}. Mean $\pm$ standard deviation (ddof=1) is reported.}
\label{tab:c51_rrc_detailed}
\begin{tabular}{l|rrl}
    \toprule
     & C51 WR & C51 RR-C & p-value \\
    \midrule
    Amidar & 347.63±61.80 & \textbf{420.61}±52.97 & 1.1e-02* \\
    Bowling & \textbf{39.45}±8.02 & 33.90±12.23 & 2.5e-01 \\
    Frostbite & 3568.36±225.49 & \textbf{4073.23}±645.30 & 3.9e-02* \\
    KungFuMaster & 21091.90±2181.97 & \textbf{21407.00}±2471.17 & 7.7e-01 \\
    Riverraid & 11984.34±910.74 & \textbf{12776.65}±852.50 & 6.0e-02 \\
    BattleZone & 22483.00±1021.12 & \textbf{23776.00}±1205.15 & 1.9e-02* \\
    DoubleDunk & \textbf{-15.04}±0.99 & -16.42±1.43 & 2.2e-02* \\
    NameThisGame & 8287.14±358.54 & \textbf{8844.65}±325.02 & 1.9e-03** \\
    Phoenix & 11920.93±1306.10 & \textbf{13749.39}±1416.65 & 7.7e-03** \\
    Qbert & 15685.83±742.83 & \textbf{16352.80}±771.19 & 6.4e-02 \\
    \bottomrule
\end{tabular}
\end{table}

\begin{table}[H]
\centering
\caption{Final performance of DQN with with-replacement sampling (WR) and RR-C. Format follows \cref{tab:c51_dqn}. Mean $\pm$ standard deviation (ddof=1) is reported.}
\label{tab:dqn_rrc_detailed}
\begin{tabular}{l|rrl}
    \toprule
     & DQN WR & DQN RR-C & p-value \\
    \midrule
    Amidar & 319.04±29.06 & \textbf{341.44}±61.32 & 3.2e-01 \\
    Bowling & 42.17±6.48 & \textbf{43.97}±9.32 & 6.2e-01 \\
    Frostbite & \textbf{2351.80}±794.45 & 1897.50±882.38 & 2.4e-01 \\
    KungFuMaster & 3674.70±6043.96 & \textbf{4291.40}±6823.80 & 8.3e-01 \\
    Riverraid & 8346.39±655.53 & \textbf{8513.25}±783.58 & 6.1e-01 \\
    BattleZone & 22651.00±1178.15 & \textbf{24546.00}±1371.04 & 3.9e-03** \\
    DoubleDunk & -17.86±5.27 & \textbf{-14.69}±5.71 & 2.1e-01 \\
    NameThisGame & 6064.81±448.55 & \textbf{6919.35}±287.76 & 1.3e-04** \\
    Phoenix & 9059.42±1617.43 & \textbf{10194.60}±805.68 & 6.8e-02 \\
    Qbert & 13191.77±1076.35 & \textbf{13859.60}±628.38 & 1.1e-01 \\
    \bottomrule
\end{tabular}
\end{table}

\begin{table}[H]
\centering
\caption{Final performance of C51 with with-replacement sampling (WR) and within-minibatch without-replacement sampling (WOR). Format follows \cref{tab:c51_dqn}. Mean $\pm$ standard deviation (ddof=1) is reported.}
\label{tab:c51_wor}
\begin{tabular}{l|rrl}
    \toprule
    & C51 WR & C51 WOR & p-value \\
    \midrule
    Amidar & \textbf{347.63}±61.80 & 342.91±27.90 & 8.3e-01 \\
    Bowling & \textbf{39.45}±8.02 & 36.58±6.54 & 3.9e-01 \\
    Frostbite & \textbf{3568.36}±225.49 & 3487.60±198.76 & 4.1e-01 \\
    KungFuMaster & \textbf{21091.90}±2181.97 & 19852.80±2957.34 & 3.0e-01 \\
    Riverraid & 11984.34±910.74 & \textbf{12325.34}±468.77 & 3.1e-01 \\
    BattleZone & 22483.00±1021.12 & \textbf{22984.00}±1031.61 & 2.9e-01 \\
    DoubleDunk & \textbf{-15.04}±0.99 & -16.52±0.91 & 2.6e-03** \\
    NameThisGame & 8287.14±358.54 & \textbf{8466.74}±290.86 & 2.4e-01 \\
    Phoenix & \textbf{11920.93}±1306.10 & 11587.56±1311.67 & 5.8e-01 \\
    Qbert & \textbf{15685.83}±742.83 & 15368.27±549.21 & 2.9e-01 \\
    \bottomrule
\end{tabular}
\end{table}

\begin{table}[H]
\centering
\caption{Final performance of DDQN+LAP with with-replacement sampling (WR) and RR-M. Format follows \cref{tab:c51_dqn}. Mean $\pm$ standard deviation (ddof=1) is reported.}
\label{tab:ddqn_lap_rrm_detailed}
\begin{tabular}{l|rrl}
    \toprule
    & DDQN+LAP WR & DDQN+LAP RR-M & p-value \\
    \midrule
    Amidar & \textbf{196.94}±24.30 & 195.35±30.06 & 9.0e-01 \\
    Bowling & 28.85±5.60 & \textbf{29.87}±3.39 & 6.3e-01 \\
    Frostbite & 1602.00±204.60 & \textbf{1761.34}±167.51 & 7.3e-02 \\
    KungFuMaster & 16628.90±1500.14 & \textbf{17580.60}±1274.08 & 1.4e-01 \\
    Riverraid & 7609.97±283.06 & \textbf{7756.11}±120.62 & 1.6e-01 \\
    BattleZone & \textbf{22897.00}±1570.29 & 20813.00±4352.74 & 1.8e-01 \\
    DoubleDunk & -17.49±1.02 & \textbf{-17.47}±0.78 & 9.5e-01 \\
    NameThisGame & 2589.54±166.60 & \textbf{2805.85}±130.60 & 4.9e-03** \\
    Phoenix & 4180.98±167.30 & \textbf{4342.48}±273.12 & 1.3e-01 \\
    Qbert & 4128.27±476.63 & \textbf{4266.98}±685.46 & 6.1e-01 \\
    \bottomrule
\end{tabular}
\end{table}

\begin{table}[H]
\centering
\caption{Final performance of DDQN+LAP with stratified sampling (ST) and RR-M+ST. Format follows \cref{tab:c51_dqn}. Mean $\pm$ standard deviation (ddof=1) is reported.}
\label{tab:ddqn_lap_st_rrm_detailed}
\begin{tabular}{l|rrl}
    \toprule
     & DDQN+LAP ST & DDQN+LAP RR-M ST & p-value \\
    \midrule
    Amidar & 181.64±16.19 & \textbf{206.07}±26.32 & 2.5e-02* \\
    Bowling & \textbf{34.86}±8.33 & 27.57±5.84 & 3.8e-02* \\
    Frostbite & 1672.56±214.35 & \textbf{1684.80}±211.25 & 9.0e-01 \\
    KungFuMaster & 16859.30±1379.02 & \textbf{17337.90}±1515.28 & 4.7e-01 \\
    Riverraid & 7474.01±260.02 & \textbf{7810.06}±321.56 & 2.0e-02* \\
    BattleZone & 22099.00±1019.70 & \textbf{22584.00}±2232.24 & 5.4e-01 \\
    DoubleDunk & -17.34±1.58 & \textbf{-17.18}±1.52 & 8.2e-01 \\
    NameThisGame & 2635.89±172.90 & \textbf{2759.89}±146.11 & 1.0e-01 \\
    Phoenix & 4214.82±307.08 & \textbf{4399.41}±206.47 & 1.3e-01 \\
    Qbert & \textbf{4109.27}±412.74 & 4017.55±496.86 & 6.6e-01 \\
    \bottomrule
\end{tabular}
\end{table}

\subsection{Learning curves}
\label{sec:learning_curves}

We provide the learning curves of C51, DQN, and DDQN+LAP with different sampling methods in \cref{fig:c51,fig:dqn,fig:c51_wor,fig:ddqn_lap_rrm,fig:ddqn_lap_st_rrm}.

\begin{figure}[H]
    \begin{center}
        \includegraphics[width=\textwidth]{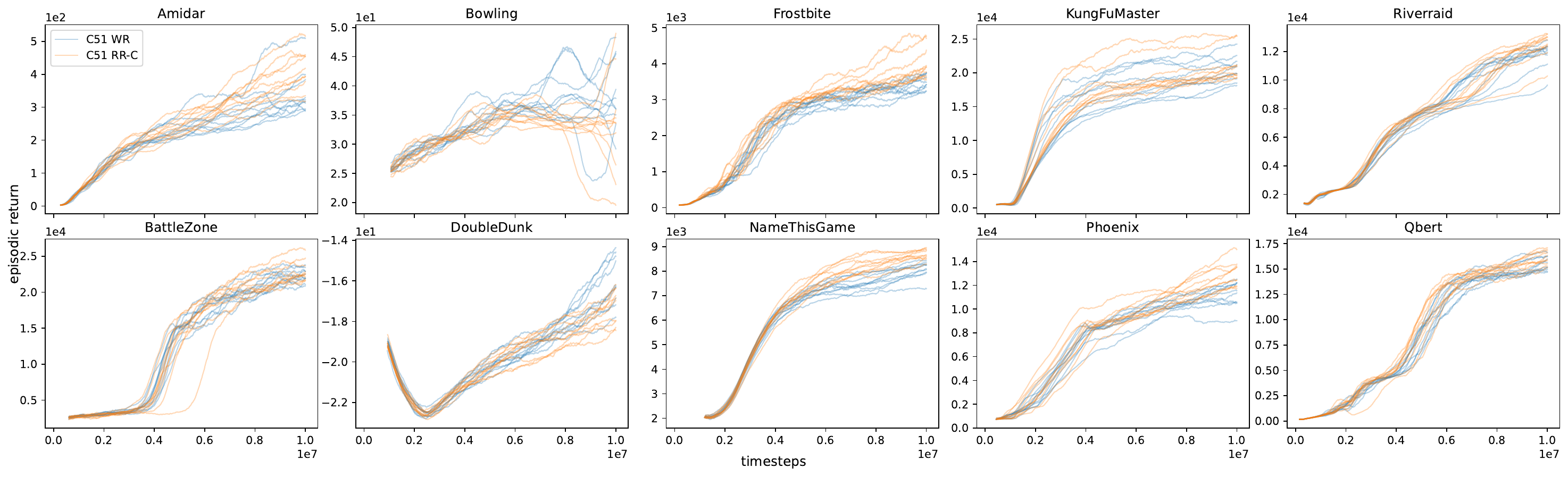}
    \end{center}
    \caption{Learning curves of C51 with with-replacement sampling (WR) and RR-C. The 500-episode rolling mean of episodic returns is plotted for each of 10 random seeds.}
    \label{fig:c51}
\end{figure}

\begin{figure}[H]
    \begin{center}
        \includegraphics[width=\textwidth]{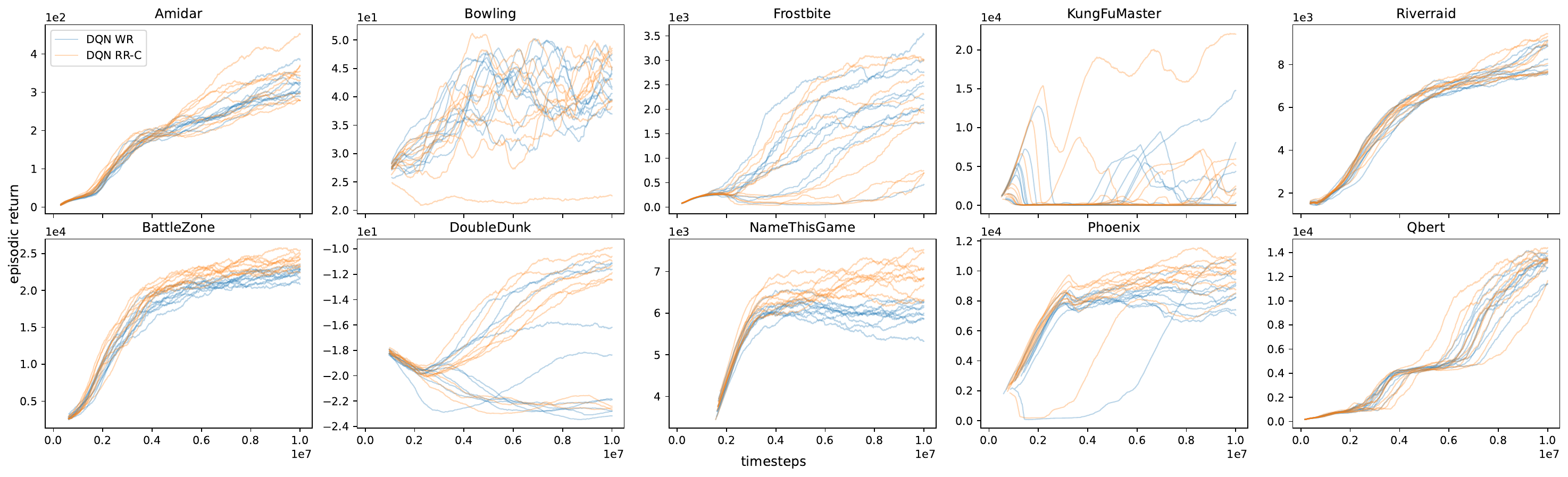}
    \end{center}
    \caption{Learning curves of DQN with with-replacement sampling (WR) and RR-C. The 500-episode rolling mean of episodic returns is plotted for each of 10 random seeds.}
    \label{fig:dqn}
\end{figure}

\begin{figure}[H]
    \begin{center}
        \includegraphics[width=\textwidth]{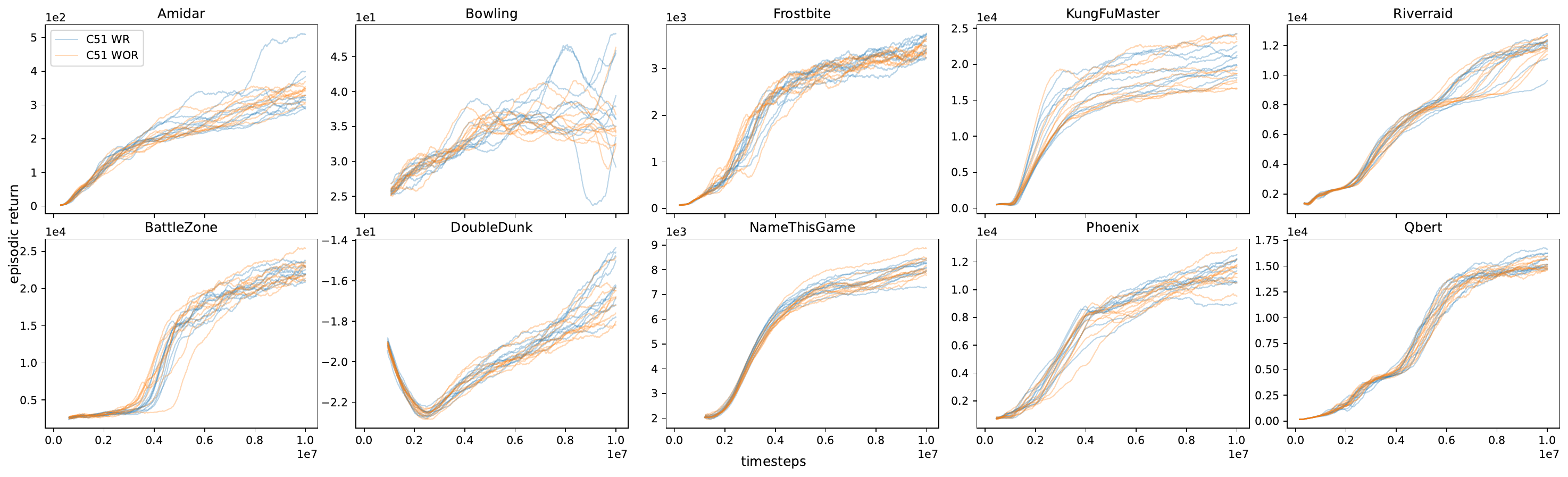}
    \end{center}
    \caption{Learning curves of C51 with with-replacement sampling (WR) and within-minibatch without-replacement sampling (WOR). The 500-episode rolling mean of episodic returns is plotted for each of 10 random seeds.}
    \label{fig:c51_wor}
\end{figure}

\begin{figure}[H]
    \begin{center}
        \includegraphics[width=\textwidth]{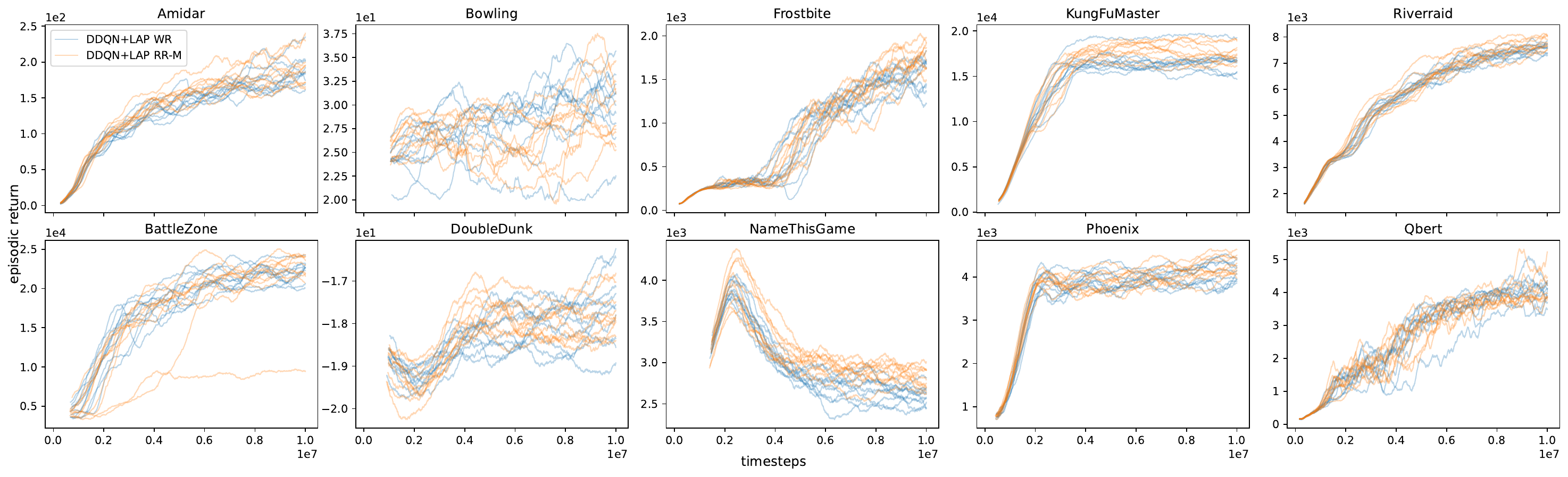}
    \end{center}
    \caption{Learning curves of DDQN+LAP with with-replacement sampling (WR) and RR-M. The 500-episode rolling mean of episodic returns is plotted for each of 10 random seeds.}
    \label{fig:ddqn_lap_rrm}
\end{figure}

\begin{figure}[H]
    \begin{center}
        \includegraphics[width=\textwidth]{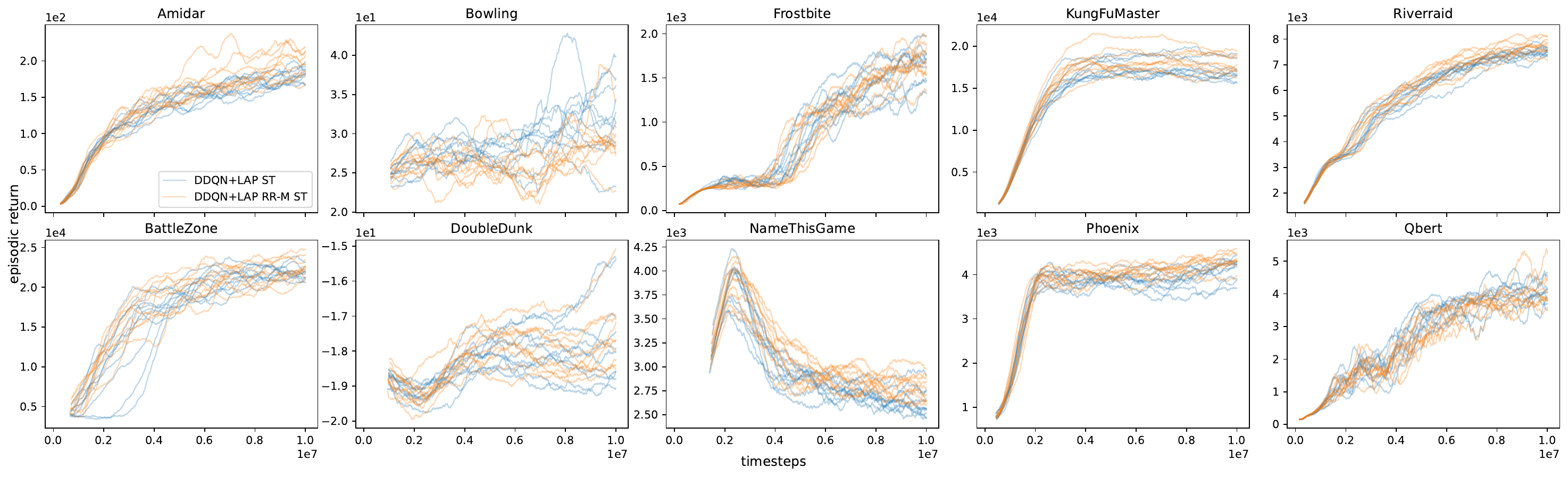}
    \end{center}
    \caption{Learning curves of DDQN+LAP with stratified sampling (ST) and RR-M+ST. The 500-episode rolling mean of episodic returns is plotted for each of 10 random seeds.}
    \label{fig:ddqn_lap_st_rrm}
\end{figure}

\end{document}